\DeclareMathOperator*{\argmin}{arg\,min}		
\DeclareMathOperator*{\arginf}{arg\,inf}		
\theoremstyle{plain}
\newtheorem{theorem}{Theorem}
\newtheorem{proposition}[theorem]{Proposition}
\theoremstyle{definition}
\newtheorem{definition}[theorem]{Definition}
\title{Geometric Visual Servo Via Optimal Transport}
\author{Ethan Canzini$^1$\thanks{The work of Ethan Canzini was supported by the EPSRC ICASE Award with Airbus UK and was co-sponsored by the University of Sheffield and Airbus UK Assembly Technologies. This work received funding from the UKRI EPSRC through the Made Smarter Innovation-Research Centre for Connected Factories (Grant EP/V062123/1) and in part by the RAEng/Airbus Research Chairs \& Senior Research Fellowships Scheme (Grant RCSRF1718/5/41). Ashutosh Tiwari is the Airbus/RAEng Research Chair in Digitization for Manufacturing at the University of Sheffield.} \and Simon Pope$^2$ \and Ashutosh Tiwari$^1$}
\date{
	$^1$School of Mechanical, Aerospace \& Civil Engineering,\\ University of Sheffield \\
	$^2$School of Electrical \& Electronic Engineering,\\ University of Sheffield \\ \texttt{\{e.a.canzini, a.tiwari, s.a.pope\}@sheffield.ac.uk}\\[2ex]%
}
\begin{document}
	\maketitle

	\begin{abstract}
		When developing control laws for robotic systems, the principle factor when examining their performance is choosing inputs that allow smooth tracking to a reference input. In the context of robotic manipulation, this involves translating an object or end-effector from an initial pose to a target pose. Robotic manipulation control laws frequently use vision systems as an error generator to track features and produce control inputs. However, current control algorithms don't take into account the probabilistic features that are extracted and instead rely on hand-tuned feature extraction methods. Furthermore, the target features can exist in a static pose thus allowing a combined pose and feature error for control generation. We present a geometric control law for the visual servoing problem for robotic manipulators. The input from the camera constitutes a probability measure on the 3-dimensional Special Euclidean task-space group, where the Wasserstein distance between the current and desired poses is analogous with the geometric geodesic. From this, we develop a controller that allows for both pose and image-based visual servoing by combining classical PD control with gravity compensation with error minimization through the use of geodesic flows on a 3-dimensional Special Euclidean group. We present our results on a set of test cases demonstrating the generalisation ability of our approach to a variety of initial positions. \\
		
		\noindent\textbf{Keywords:} Robot Control; Geometric Modelling; Optimal Control; System Dynamics
	\end{abstract}

	
	\section{Introduction}
	\label{sec:introduction}

    As the proliferation of robotics across a variety of fields increases, the development of feedback control laws remains a key research challenge. Many robotic platforms rely on vision systems to produce stable feedback laws \cite{stochastic_jigs_1982,vision_review_2015,safe_learning_2024}, as this sensing modality leverages a balance between cost, efficiency, and safety in a variety of environments and applications \cite{sensor_review_2021}. Connecting the vision input to a stable feedback law remains a key challenge in developing autonomous systems that are capable of dealing with uncertainty in their environmental observations. \\
    
    Visual servoing refers to a wide domain of control architectures that generate an error signal based on the tracking of image features of components. By relating the visual features to a desired pose, this form of control allows for the computation of error signals that incorporate either the pose or the camera features and track to the desired goal \cite{handbook_robotics_2008}. Visual servoing-based control remains a key part of the robotics control community, with applications ranging from manipulators \cite{vs_mpc_2019}, underwater autonomous vehicles \cite{vs_underwater_2022} and spacecraft orbital rendezvous \cite{vs_spacecraft_2024}, where the interaction matrix $\mathbf{L}_e$ can be tuned depending on the environment that the robot is operating in. However, these extracted features are purely based on camera parameters which can lead to noisy estimations of the camera pose and requires human and domain-specific knowledge to fine-tune which specific features are being tracked \cite{2d_vs_1999,vs_review_2023}. Furthermore, the use of visual servo control with system dynamics requires the use of inverse kinematics and coordinate-level proportional-derivative control which can infringe on performance guarantees \cite{comp_visual_servo_2023}. Additionally, many system dynamics lose the benefits of the physical characteristics of the robotic platforms during the derivation process \cite{history_manip_2022}. Generally speaking, visual servoing takes two different forms:
    \begin{itemize}
    	\item \textbf{Pose-Based Visual Servo (PBVS)} which relates the extracted features to a specific pose and computes the difference between the desired and current pose
    	\item \textbf{Image-Based Visual Servo (IBVS)} where the extracted features are tracked during control towards the desired feature set.
    \end{itemize}
    These features are normally extracted using geometric representations of the images and therefore lack the the granularity and feature set that is required for high-accuracy tasks \cite{vs_image_moments_2004}. For vision-based feedback, the goal is to minimise the error $e(t)$ defined as:
    \begin{equation}\label{eq:vs}
    	e(t) = s(m(t),a) - s^\ast,
    \end{equation}
    \noindent where $m(t)$ represents the camera measurements and $a$ are other intrinsic camera properties that are based on pre-defined photographic parameters. Note that, compared to most standard feedback control literature, the error is calculated in the reverse manner. For most approaches, including the one put forward in this work, the target camera features $s^\ast$ are kept constant and motionless during operation. \\
    
    Visual servoing has a rich research history in the field of robot control, with applications ranging from terrestrial robotics in manufacturing environments \cite{vs_pipe_2019, vs_aero_2020}, to controlling underwater autonomous vehicles \cite{vs_underwater_2022}, to endoscopic surgery \cite{vs_medical_2014} and the docking and manipulation of non-cooperative space satellites \cite{vs_spacecraft_2024, vs_noncooperative_2024}. Initial work was focused on better ways to extract and track features during operation, with an emphasis on using RGB images to generate and track features that are present in the frame \cite{image_moments_2004,feature_tracking_2005}. As the demand for more accurate feature tracking increased, multi-camera systems emerged as a way to improve the feature accuracy of the tracking system \cite{vs_multi_camera_2011}. However, the lack of accurate distance estimation from monocular camera fusion remained a key problem. The proliferation and improvement in RGB-D cameras allowed the development of algorithms that utilise point clouds as features, allowing the leveraging of distance data to create a depth map of features \cite{vs_depth_map_2014, vs_point_cloud_2020, vs_image_based_2022}. \\
    
    The evolution of visual servoing control has led to its formulation as an optimal control problem. This has led to its use within model predictive control (MPC) frameworks, with further implementations looking to embed robust control to improve the accuracy under feature uncertainty \cite{vs_mpc_2019,mpc_quad_2020,robust_vs_2016}. Control barrier functions provide an additional level of robustness to the visual servo control \cite{diffopt_2024}, but struggle to compute optimal feedback gains in high-dimensional environments. Machine learning (ML) methods seek to alleviate this curse of dimensionality with self-supervised learning and human demonstrations providing reference trajectories to generate optimal controls \cite{dome_2022,vs_ssl_2022}. \\
    
    Recent advances in the use of depth camera technology have enabled the development of control laws that use depth inputs to generate point clouds of physical features. This proves beneficial when dealing with complex geometries that are difficult to define or when working in cluttered environments \cite{vs_point_cloud_2020,deep_vs_2022}. These point clouds can be defined as probability measures, whereby metrics can be evaluated across them to find displacements across spaces \cite{ot_change_detect_2023,ot_metric_point_2024}. Optimal transport (OT) allows the computation of distance metrics between these measures \cite{ot_book_2015}, and can be used to generate a variety of different motion paths for autonomous systems \cite{ot_motion_2023}. More recently, thanks to the landmark work by Benamou and Brenier \cite{ot_bb_2000}, OT can be formulated dynamically to generate flow maps that are represented by a set of partial differential equations. This has seen a prevalence in control theory applications, where it can be beneficial to represent states as probability measures and find the optimal coupling between them \cite{ot_applications_2021}. This approach can be seen as an energy-based control law \cite{ot_control_2021}, and can be used for averaging control of particles \cite{ot_averaged_2023} and for steering control of stochastic linear systems \cite{ot_steering_2016,ot_linear_2017}. Furthermore, due to the nature in which optimal transport is defined, the theory lends itself to geometrical representations of the underlying dynamics and can be used for systems where geometry is crucial \cite{ot_curvature_chapter_2011,ot_manifolds_2024}. \\
    
    When working with systems that operate on geometric spaces, it is beneficial to derive the dynamics such that the geometric features are maintained. Many dynamical systems operate on the basis that their mechanical characteristics can be defined using geometric qualities such as finite rotations and positions \cite{note_rot_1989}. Lie theory provides a robust set of tools from group theory and differential geometry that can be used to define the kinematics of multi-body mechanical systems such that their geometric qualities are maintained \cite{screw_lie_2018}. Using Lie theory, system dynamics can be represented and interpreted in such a way that the kinematics of the system are formulated using geometric qualities, thus allowing the exploitation of relationships when generating control signals \cite{global_hamilton_2018,contact_rich_2024}. Hamiltonian mechanics are an alternative to classical mechanics by allowing the derivation of dynamical systems in terms of their conjugate momenta whilst preserving geometric representations \cite{intro_mechanics_1999}. For many control problems, the target dynamics can be said to exist on the 3-dimensional Special Euclidean group $\mathtt{SE}(3)$, which is composed of rotations and positions which in turn can be embedded into the dynamics, thus providing a rich framework for developing control laws. Recent work in robotics has shown that Hamiltonian mechanics allows the use of energy-based controllers to steer systems towards reference inputs \cite{robot_hamilton_2016,hamilton_pendulum_2024}, with the added benefit that the geometric representation can be maintained throughout the dynamics \cite{duong_hamiltonian_2021}. Reformulating the dynamics in port-Hamiltonian form leverages energy-conservation methods by providing a physical interpretation of the desired control law and outlining the interconnection between mechanical sub-systems \cite{port_hamilton_overview_2014,port_hamilton_2020}. \\
    
    \begin{figure*}[t]
    	\centering
    	\includegraphics[width=\linewidth]{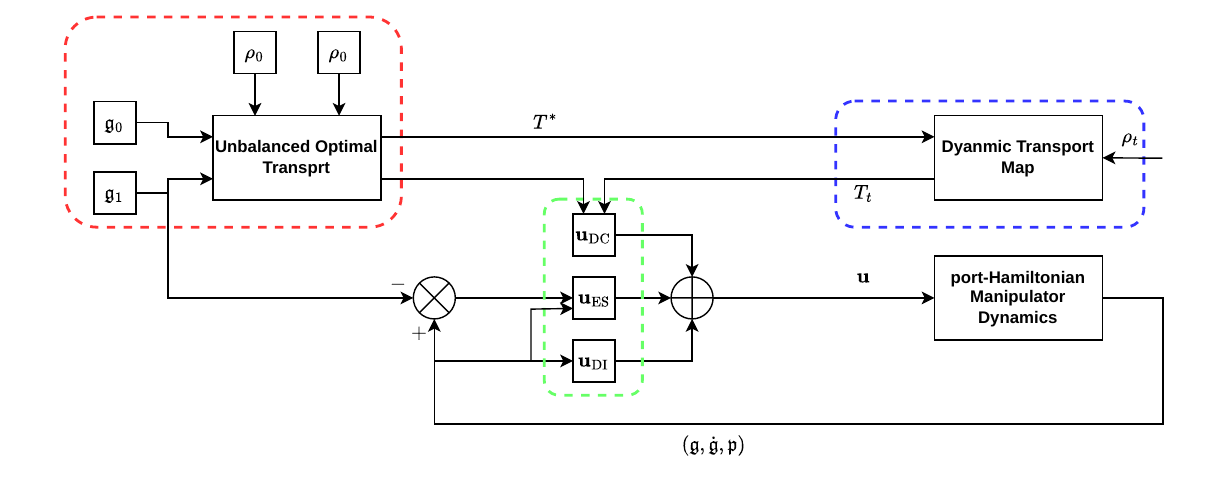}
    	\caption{Our proposed control architecture for geometric visual servoing. The computation of the optimal transport map $T^\ast$, shown in red, is performed during the initialisation of the control loop using the initial depth map and pose (Section \ref{sec:sub:features}). The transport map is then used online during the computation of the disturbance correction input $\mathbf{u}_{\text{DC}}$ using the current camera depth map, highlighted in blue (Section \ref{sec:sub:energy-ot}). The input is then combined with the passivity-based gains (Section \ref{sec:sub:closed-loop}) to compute the input to the manipulator.}
    	\label{fig:our-method}
    \end{figure*}
    
    In this work, we provide a reformulation of visual servoing by examining the nature of the extracted features from the camera through a geometric lens of the system dynamics. To alleviate the reliance on either use PBVS or IBVS methods when computing tracking errors, we develop a combined framework that can track both pose and image errors without the need to hand-craft specific features from images. Using the work established in \cite{duong_hamiltonian_2021}, we derive port-Hamiltonian dynamics on the $\mathtt{SE}(3)$ Lie group that allows for the use of an energy-based controller. By defining the extracted features from the camera as a probabilistic depth map, we apply dynamic optimal transport to insert energy into the dynamical system to steer towards the desired depth map and demonstrate how this minimum-energy task-space controller can be verified using the geometry of the Lie group. We validate our work on real-world hardware and provide future directions for research. Our method is shown in figure \ref{fig:our-method} and our contributions are as follows:                   
    \begin{enumerate}
        \item A formulation of the system dynamics of a robot manipulator in the port-Hamiltonian form, combining the task-space geometry with the joint-space control 
        \item Using the output of an RGB-D camera to generate a depth map, which in turn is described as a probability measure on the Lie group pose of the manipulator
        \item By leveraging the geometry of the Lie group, we redefine the error based on the reference depth map and use this error to generate a control signal to steer toward the desired depth map.
    \end{enumerate}
    An overview of our control algorithm is provided in figure \ref{fig:our-method}: highlighted in red, we compute the unbalanced optimal transport map from the initial depth cloud and the desired depth cloud. This is then used during the algorithm's operation in the dynamic transport control generation, highlighted in blue, which uses the transport map at each time step to compute the control input to align the current and target depth maps. This is then combined with the geometric pose control generation, highlighted in green, to provide a complete treatment of the visual servoing problem in both the pose and image spaces. Within this work, we focus on the application of robotic manipulators for visual servoing control. However, the approach described henceforth could be applied to any autonomous system that utilises vision-based control. The outline of this paper is as follows: section \ref{sec:prelim} provides a brief overview of the technical background for the proposed approach; section \ref{sec:problem} outlines the problem statement for the visual servo control; section \ref{sec:depth-dynamics} derives the dynamics for the manipulator and the depth cloud probability measures, then illustrates the use of the passivity-based control on $\mathtt{SE}(3)$ for transporting the depth map to the desired pose; section \ref{sec:results} provides our results on several test-cases that illustrate the effectiveness of our control solution; finally, section \ref{sec:conclusion} summarises our contributions and outlines what future work could be done in this field.
    
    \section{Problem Formulation}\label{sec:problem}
    
    Consider a robot manipulator with the task of moving a camera from an initial pose in $\mathtt{SE}(3)$ to a target pose. The camera provides a depth map input distribution $\mu_0 \in \mathcal{P}(\mathtt{SE}(3))$ which needs to be controlled to a distribution on the target pose $\mu_1 \in \mathcal{P}(\mathtt{SE}(3))$. We consider the instance where the vision system is mounted to the end-effector of the robot manipulator - known as the eye-in-hand configuration - such that during operation, the motion of the camera traces a path on the $\mathtt{SE}(3)$ Lie group. \\
    
    Let $\mathcal{Q} \subset \mathbb{R}^n$ denote the robot configuration space, where $n$ corresponds to the robot's degrees of freedom and the vector of joint positions is denoted as $\mathfrak{q} \triangleq [q_1,...,q_n] \in \mathcal{Q}$. We define the robots nonlinear dynamics under the input $\mathbf{u} \in \mathcal{U}$ as
    \begin{equation}
    	\mathbf{x}_{t+1} = f(\mathbf{x}, \mathbf{u}),
    \end{equation}
    where the task-space dynamics capture the geometry of the end-effector and the configuration space of the manipulator, allowing the control of the eye-in-hand camera toward the desired depth map. The goal is to find the optimal control input that minimises the distance error between the target and current depth frames\footnote{Note that in opposition to \ref{eq:vs}, we return to the optimal control notation style when computing the error between depth maps.}:
    \begin{equation}
    	\mathbf{u}^\ast = \argmin_{\mathbf{u} \in \mathcal{U}} \Vert s^\ast - s_t \Vert^p_p.
    \end{equation}
    where $p$ denotes the norm of the error signal. 
	
	\section{Preliminaries}\label{sec:prelim}

    \subsection{$\mathtt{SE}(3)$ Matrix Lie Group}
    
    We first provide some background and guidance on notation for the $\mathtt{SE}(3)$ Lie group. We refer the reader to \cite{geo_control_2005} and \cite{intro_mechanics_1999} for a more detailed and general overview. We begin by providing some definitions for our use of Lie algebras.
    
    \begin{definition}\label{def:pairing}
    Let $\xi$ and $\varphi$ $\in$ $\mathbb{R}^{n\times m}$. The dot product 
    $\langle \cdot, \cdot \rangle$ can be chosen as \cite{duong_hamiltonian_2021}: 
    \begin{equation}
        \langle \xi, \varphi \rangle = tr(\xi^\top\varphi).
    \end{equation}
    \end{definition}
    
    \begin{definition}
    For a given pose $\boldsymbol{g} \in \mathtt{SE}(3)$, the adjoint $\mathtt{Ad}_{\boldsymbol{g}} : \boldsymbol{g} \rightarrow \boldsymbol{g}$ is defined as:
    \begin{equation}
        \mathtt{Ad}_{\boldsymbol{g}} (\psi) = \boldsymbol{g}\psi\boldsymbol{g}^{-1}.
    \end{equation}
    For the Lie algebra $\mathfrak{se}(3)$, the directional derivative of $\mathtt{Ad}_{\boldsymbol{g}}$ in the direction of $\psi \in \mathtt{SE}(3)$ is the algebra adjoint $\mathtt{ad}_{\boldsymbol{g}}$ which is defined as \cite{screw_lie_2018}:
    \begin{equation}\label{eq:adjoint}
        \mathtt{ad}_{\boldsymbol{g}} (\psi) = \frac{\text{d}}{\text{d}t} \mathtt{Ad}_{\exp_{\mathtt{SE}(3)}(t\boldsymbol{g})}(\psi) \bigg|_{t=0} = [\boldsymbol{g}, \psi],
    \end{equation}
    where $[\cdot, \cdot] : \mathfrak{se}(3) \times \mathfrak{se}(3) \rightarrow \mathfrak{se}(3)$ is the Lie bracket operator \cite[Def. 5]{duong_hamiltonian_2021} and $\exp_{\mathtt{SE}(3)}(\hat{\cdot})$ is the exponential map from $\mathtt{SE}(3)$ to $\mathtt{R}^3 \bigoplus \mathtt{R}^3$. 
    \end{definition}
    
    Let $\boldsymbol{g} \in \mathbb{R}^3 \times \mathbb{R}^3$ be a pose of the end effector in a three-dimensional Euclidean space combining the rotational elements $\mathbf{R}(t) \triangleq [\mathbf{r}_1, \mathbf{r}_2, \mathbf{r}_3]^\top$ in the Special Orthogonal group $\mathtt{SO}(3)$ and positional elements $\mathbf{p}(t) \in \mathbb{R}^3$. For any rigid body trajectory evolving in time $t \mapsto \boldsymbol{g}$, we denote its membership of the special Euclidean Lie group $\mathtt{SE}(3)$ as:
    \begin{equation} \label{eq:se3}
    \boldsymbol{g} \triangleq \left \{ \left [\begin{array}{cc}
        \mathbf{R}(t) & \mathbf{p}(t)\\
        \mathbf{0}_{1\times3} & 1
    \end{array}\right ] \in \mathtt{SE}(3)\, \bigg| \, \mathbf{R}(t) \in \mathtt{SO}(3),\, \mathbf{p}(t) \in \mathbb{R}^3 \right \}.
    \end{equation}
    
    The body of the end effector has a velocity that moves along its trajectory $t \mapsto \boldsymbol{\xi}(t) = (\boldsymbol{v}, \boldsymbol{\omega}) \in \mathbb{R}^3 \bigoplus \mathbb{R}^3$. Using an abuse of notation, we can define the isomorphism $\hat{(\cdot)}: \mathbb{R}^3 \bigoplus \mathbb{R}^3 \rightarrow \mathfrak{se}(3)$ that defines the vector space Lie algebra of $\mathtt{SE}(3)$ \cite{intro_mechanics_1999}. 
    \begin{definition}
        A twist $\hat{\boldsymbol{\xi}}$ consists of the rotational and positional velocities and is written as:
        \begin{equation}
        \hat{\boldsymbol{\xi}} \triangleq \left \{ \left [\begin{array}{cc}
            \hat{\boldsymbol{\omega}}(t) & \boldsymbol{v}(t)\\
            \mathbf{0}_{1\times 3} & 0
        \end{array}\right ] \in \mathfrak{se}(3)\, \bigg| \, \hat{\boldsymbol{\omega}}(t) \in \mathfrak{so}(3),\, \boldsymbol{v}(t) \in \mathbb{R}^3 \right \},
        \end{equation}
        
        \noindent where:
        \begin{equation}
        \hat{\boldsymbol{\omega}}(t) = \left [ \begin{array}{ccc}
            0 & -\omega_z & \omega_y \\
            \omega_z & 0 & -\omega_x \\
            -\omega_y & \omega_x & 0
        \end{array} \right ]
        \end{equation}
        \noindent is the skew-symmetric rotational velocity matrix. We denote $(\cdot)^\vee: \mathfrak{se}(3) \rightarrow \mathbb{R}^3 \bigoplus \mathbb{R}^3$ as the inverse isomorphism that maps from the $\mathtt{SE}(3)$ Lie algebra to the vector space. 
    \end{definition}
    For notational simplicity, we say $\hat{\boldsymbol{\xi}}(t) \equiv \hat{\boldsymbol{\xi}}$ for all time-dependent signals. \\
    
    Rodriguez's formula for unit twists allows the efficient computation of the rotations across twists \cite{note_rot_1989}. It is defined as:
    \begin{equation}\label{eq:rod}
    \begin{aligned}
    e^{\hat{\boldsymbol{\xi}}q} \approx \mathbf{I}_4 + \hat{\boldsymbol{\xi}}q & = \left [\begin{array}{cc}
            e^{\hat{\boldsymbol{\omega}}q} & (\mathbf{I}_3 - e^{\hat{\boldsymbol{w}}q})(\boldsymbol{\omega}\times \boldsymbol{v}) + \boldsymbol{\omega}\boldsymbol{\omega}^\top \boldsymbol{v}q \\
            0 & 1
        \end{array} \right ], \\
        \text{where } e^{\hat{\boldsymbol{\omega}}q} & \approx \mathbf{I}_3 + \frac{\hat{\boldsymbol{\omega}}}{||\boldsymbol{\omega}_i||}\sin(||\boldsymbol{\omega}||q) + \frac{\hat{\boldsymbol{\omega}}^2}{||\boldsymbol{\omega}||^2}(1 - \cos(||\boldsymbol{\omega}||q).
    \end{aligned}
    \end{equation}
    
    \begin{definition}\label{def:left-invariant}
    Let $\boldsymbol{g} \in \mathtt{SE}(3)$ and $\boldsymbol{h} \in \mathtt{SE}(3)$. The left translation $\mathtt{L}_{\boldsymbol{g}} : \mathtt{SE}(3) \rightarrow \mathtt{SE}(3)$ is defined as:
    \begin{equation}
        \mathsf{L}_{\boldsymbol{g}}(\boldsymbol{h}) = \boldsymbol{g}\boldsymbol{h}.
    \end{equation}
    For a twist $\hat{\boldsymbol{\xi}} \in \mathfrak{se}(3)$, the kinematics of the Lie group give us the left-invariant vector that allows us to compute the velocity on the tangent space $\dot{\boldsymbol{g}} \in \mathtt{T}_{\boldsymbol{g}}\mathtt{SE}(3)$:
    \begin{equation}\label{eq:left-vector}
        \dot{\boldsymbol{g}} = \mathsf{T}_{\text{e}}\mathsf{L}_{\boldsymbol{g}}(\hat{\boldsymbol{\xi}}) = \boldsymbol{g}\hat{\boldsymbol{\xi}}.
    \end{equation}
    \end{definition}
    
    \begin{definition} \label{def:dual-map}
        Consider the pairing $\langle \cdot, \cdot\rangle$ from Definition \ref{def:pairing}. For all $\eta, \xi$ on $\mathfrak{se}(3)^\ast \times \mathfrak{se}(3)$, the dual map $\mathsf{T}^\ast_e\mathsf{L}_{\mathsf{g}}$ satisfies:
        \begin{equation}
            \langle \mathsf{T}^\ast_e\mathsf{L}_{\mathfrak{g}}(\eta), \xi \rangle = \langle \eta, \mathtt{T}_e\mathsf{L}_{\mathsf{g}}(\xi) \rangle.
        \end{equation}
    \end{definition}
    
    \subsection{Hamiltonian Mechanics}
    
    Hamiltonian mechanics are a representation of the Lagrangian mechanics of a system using the generalized coordinates and the conjugate momenta \cite{intro_mechanics_1999}. We provide here an overview of the physics that we will use in our methodology, and refer the reader to \cite{global_hamilton_2018} and \cite{intro_mechanics_1999} for more details. \\
    
    Let $\boldsymbol{q} \in \mathcal{Q}$ be the generalized coordinate in the configuration space with $\dot{\boldsymbol{q}} \in \mathsf{T}\mathcal{Q}$ being the generalized velocity on the tangent space of $\mathcal{Q}$. The Lagrangian for the desired state space $\mathbf{x} = (\boldsymbol{q}, \dot{\boldsymbol{q}}) \in \mathcal{X} \subset \mathsf{T}\mathcal{Q}$ is defined as the difference between the kinetic and potential energies:
    \begin{equation}\label{eq:lagrange}
    \mathcal{L}(\boldsymbol{q}, \dot{\boldsymbol{q}}, t) = \mathcal{K}(\boldsymbol{q}, \dot{\boldsymbol{q}}) - \mathcal{G}(\boldsymbol{q}).
    \end{equation}
    To convert from Lagrangian coordinates to Hamiltonian phase space $(\boldsymbol{q}, \dot{\boldsymbol{q}}) \mapsto (\boldsymbol{q}, \boldsymbol{p})$, we use the Legendre transform.
    \begin{definition}\label{def:hamilton}
    The Hamiltonian $\mathcal{H}(\boldsymbol{q}, \boldsymbol{p}, t)$ is defined from the Lagrangian using the Legendre transform:
    \begin{equation}\label{eq:legendre-hamilton}
        \mathcal{H}(\boldsymbol{q}, \boldsymbol{p}, t) \triangleq \sup_{\dot{\boldsymbol{q}} \in \mathsf{T}\mathcal{Q}}\, \langle \boldsymbol{p}, \dot{\boldsymbol{q}} \rangle - \mathcal{L}(\boldsymbol{q}, \dot{\boldsymbol{q}}, t)
    \end{equation}
    where the phase space-time derivatives are defined as:
    \begin{equation}\label{eq:phase-space}
        \dot{\boldsymbol{q}} = \frac{\partial \mathcal{H}}{\partial\boldsymbol{p}}, \ \dot{\boldsymbol{p}} = - \frac{\partial \mathcal{H}}{\partial\boldsymbol{q}} + \tau,
    \end{equation}
    where $\tau$ represents the generalised forces in the system. The phase space coordinates act on the cotangent bundle $(\boldsymbol{q}, \boldsymbol{p}) \in \mathcal{X}^* \subset \mathtt{T}^*\mathcal{Q}$. 
    \end{definition}
    
    Definition \ref{def:hamilton} allows the formulation of the dynamics in a port-based form to ensure energy conservation \cite{schaft_port_2004}. 
    \begin{definition}\label{def:port-hamil}
    For a phase space $\mathbf{x} = (\boldsymbol{q}, \boldsymbol{p}) \in \mathcal{X}^\ast$, the dynamics $\dot{\mathbf{x}} = (\dot{\boldsymbol{q}}, \dot{\boldsymbol{p}}) \in \mathsf{T}\mathcal{X}^\ast$ satisfy the port-Hamiltonian structure:
    \begin{equation}
        \dot{\mathbf{x}} = \left [ \mathcal{J}(\boldsymbol{q}, \boldsymbol{p}) - \mathcal{R}(\boldsymbol{q}, \boldsymbol{p}) \right]\nabla\mathcal{H}(\boldsymbol{q}, \boldsymbol{p}, t ) + \mathcal{B}(\boldsymbol{q}, \boldsymbol{p})\mathbf{u},
    \end{equation}
    where $\mathcal{J}(\boldsymbol{q}, \boldsymbol{p})$ is the skew-symmetric interconnection matrix which represents energy storage elements and $\mathcal{R}(\boldsymbol{q}, \boldsymbol{p})\succeq 0$ is the energy dissipation matrix. 
    \end{definition}
    
    \begin{definition}\label{def:lie-poisson-bracket}
        Let $\mathcal{F}$ be a function in the phase-space $(\boldsymbol{q}, \boldsymbol{p})$. The time derivative $\dot{\mathcal{F}}$ can be defined using the Poisson bracket:
        \begin{equation}
            \begin{aligned}
                \dot{\mathcal{F}} = \left \{ \mathcal{F}, \mathcal{H} \right \}& = \frac{\partial\mathcal{F}}{\partial t} + \sum_{i=1}^n \left( \frac{\partial \mathcal{F}}{\partial \boldsymbol{q}_i}\frac{\partial\mathcal{H}}{\partial\boldsymbol{p}_i} - \frac{\partial \mathcal{F}}{\partial \boldsymbol{p}_i}\frac{\partial\mathcal{H}}{\partial\boldsymbol{q}_i}\right) \\
                & = \frac{\partial\mathcal{F}}{\partial t} + \left\langle \frac{\partial \mathcal{F}}{\partial \boldsymbol{q}},\frac{\partial\mathcal{H}}{\partial\boldsymbol{p}} \right\rangle - \left\langle \frac{\partial \mathcal{F}}{\partial \boldsymbol{p}},\frac{\partial\mathcal{H}}{\partial\boldsymbol{q}} \right\rangle,
            \end{aligned}
        \end{equation}
        where $\mathcal{H}$ denotes the Hamiltonian. 
    \end{definition}
    
    \subsection{Optimal Transport}
    
    For a more complete reading on the applications of optimal transport in control systems, we refer the readers to \cite{ot_control_2021} and \cite{ot_book_2015}. We start by defining the Monge problem for probability measures. 
    
    \begin{definition}\label{def:monge-problem}
    Let $\mu, \nu \in \mathcal{P}(\mathbb{R}^n)$ be $n$-dimensional probability measures. Provided there exists a strongly convex cost function $c : \mathcal{X} \times \mathcal{Y} \rightarrow \mathbb{R}$, the optimal transportation map $T^*$ that moves $\mathcal{X}$ to $\mathcal{Y}$ is defined as:
    \begin{equation}\label{eq:trans-map}
        T^* \triangleq \arginf_{T\sharp\mu=\nu} \int_{\mathbb{R}^n} c(x, T(x))\, \text{d}\mu(x),
    \end{equation}
    where $T\sharp$ is the push-forward operator that moves the initial measure to the target measure such that $T\sharp\mu = \nu$. 
    \end{definition}
    \begin{definition}\label{def:kantorovich}
    Let $\Pi(\mu, \nu) = \{ \pi \in \mathcal{P}(\mathcal{X} \times \mathcal{Y}) : P_{\mathcal{X}\sharp}\pi = \mu,\, P_{\mathcal{Y}\sharp}\pi = \nu \}$ be the set of couplings on $\mathbb{R}^n \times \mathbb{R}^n$. The Wasserstein distance between the two distributions $\mu, \nu$ is defined as:
    \begin{equation}\label{eq:wasserstein}
        \mathbb{W}(\mu, \nu) \triangleq \inf_{\pi \in \Pi(\mu, \nu)} \iint c(x, y)\pi(x,y)\, \text{d}x\text{d}y.
    \end{equation}
    \end{definition}  
    Definitions \ref{def:monge-problem} and \ref{def:kantorovich} are both static formulations of the optimal transport problem. For control theoretic problems, the transport problem can be re-framed as a dynamic transport problem. Using the Eulerian definition of optimal transport theory, the dynamic formulation of optimal transport can be framed as a stochastic optimal control problem to choose a control input :
    \begin{align}
        \inf_{u \in \mathcal{U}} \ & \mathbb{E} \left \{ \int^{1}_{0} \left\|u(t)\right\|^2 \text{d}t \right \}, \label{eq:sub:stoch-ot}\\ 
        \text{s.t. }\dot{x}(t) = &\, \mathbf{A}x(t) + \mathbf{B}u(t) + \sqrt{\epsilon}\mathbf{B}\omega(t), \label{eq:sub:stoch-dynamics}\\
        x(0) \sim &\, \mathcal{N}(m_0, \Sigma_0),\, x(1) \sim \mathcal{N}(m_1, \Sigma_1),
    \end{align} 
    where $\omega(t)$ is standard additive Gaussian white noise and $\epsilon$ is the entropy regularisation term used to simplify the optimisation problem \cite{ot_control_2021}. Note that the time interval here $t \in [0,1]$ is arbitrarily scaled to this range. 

	\section{Methodology} \label{sec:depth-dynamics}

    \subsection{Task-Space port-Hamiltonian Dynamics}\label{sec:sub:ph-dynamics}
    
    For creating an energy-based controller for the visual servoing problem, we first define the closed-loop dynamics in a port-based structure. This is done by deriving the Hamiltonian dynamics on the $\mathtt{SE}(3)$ Lie group and then formulating the closed-loop port-Hamiltonian structure for the manipulator. \\
    
    As we are working in the task space of the manipulator but controlling the joint space, the first step is to find a relationship between the joint space and the task space that maintains the geometric qualities. We can determine the body velocity $\boldsymbol{\xi}_b : \mathbb{R} \rightarrow \mathbb{R}^3 \bigoplus \mathbb{R}^3 $ using \cite[Lemma 5.3]{geo_control_2005} as $\boldsymbol{\xi}_b = (\boldsymbol{g}^{-1}\dot{\boldsymbol{g}})^\vee$. As the configuration space for a robotic manipulator is normally denoted as the joints of the robot, we want to obtain a relationship for the forward kinematics map that maintains the geometric representation of the task space. \\
    
    The forward's kinematics map $\boldsymbol{g}: \mathcal{Q} \rightarrow \mathtt{SE}(3)$ can be written in terms of unit twists for each joint \cite{math_intro_manip_1994}:
    \begin{equation}
    \boldsymbol{g}(\mathfrak{q}) = \left ( \prod_{i=1}^n e^{\hat{\boldsymbol{\xi}}_iq_i} \right ) \boldsymbol{g}(0),
    \end{equation}
    
    \noindent where $\boldsymbol{g}(0)$ is the zero configuration of the manipulator and the unit twists for each joint are defined using (\ref{eq:rod}). The body velocity of the end effector can be defined as a twist:
    \begin{equation}\label{eq:body-vec}
    \begin{aligned}
        \boldsymbol{\xi}_b = &\ \mathbf{J}_b(\mathfrak{q})\dot{\mathfrak{q}},\\
    \end{aligned}
    \end{equation}
    where:
    \begin{equation}
        \begin{aligned}
            \mathbf{J}_b(\mathfrak{q}) = & \left [ \boldsymbol{\xi}_1^\dagger,\, \cdots,\, \boldsymbol{\xi}_{n-1}^\dagger,\, \boldsymbol{\xi}_{n}^\dagger  \right ],\\
        \boldsymbol{\xi}_{i}^\dagger = &\, \text{Ad}^{-1}_{\boldsymbol{g}(\mathfrak{q})}\, \boldsymbol{\xi}_i.
        \end{aligned}
    \end{equation}
    \noindent is the body Jacobian, relating the end-effector twists relative to the joint velocities in the tool frame \cite{math_intro_manip_1994}. Using this relationship, we can define the dynamics of the manipulator whilst maintaining the geometry of the end-effectors pose. Let $\mathfrak{g} \triangleq \boldsymbol{g}(\mathfrak{q}) \in \mathtt{SE}(3)$ be the end-effector generalized position. From (\ref{eq:left-vector}), we can define the kinematic constraint for the velocity as
    \begin{equation}\label{eq:lie-state}
    \dot{\mathfrak{g}} = \mathsf{T}_e\mathsf{L}_\mathfrak{g}(\hat{\boldsymbol{\xi}}) = \mathfrak{g}\hat{\boldsymbol{\xi}}.
    \end{equation}
    
    We can reformat the Lagrangian in (\ref{eq:lagrange}) to incorporate the pose and twist velocities:
    \begin{equation}\label{eq:lagrange-twist}
    \mathcal{L}(\mathfrak{g}, \hat{\boldsymbol{\xi}}) = \mathcal{K}(\mathfrak{g}, \hat{\boldsymbol{\xi})} - \mathcal{G}(\mathfrak{g}),
    \end{equation}
    \noindent which is defined in the task-space of the manipulator. To define our dynamics, we must take into consideration the relationship in (\ref{eq:body-vec}) between the joint space and the task space velocities. We can use the joint space of the manipulator to derive task space dynamics that maintain the geometric representation of the end-effector whilst enabling joint-level control of the manipulator \cite{handbook_robotics_2008}. The kinetic energy in (\ref{eq:lagrange-twist}) is found for the end effector using the relationship in (\ref{eq:body-vec}):
    \begin{equation}\label{eq:kinetic}
    \begin{aligned}
        \mathcal{K}(\mathfrak{g}, \hat{\boldsymbol{\xi}}) & = \frac{1}{2}\dot{\mathfrak{q}}^\top\mathbf{M}(\mathfrak{q})\dot{\mathfrak{q}}\\
        & = \frac{1}{2}\hat{\boldsymbol{\xi}}^\top_b\mathbf{J}^{+\top}_b(\mathfrak{q})\mathbf{M}(\mathfrak{q})\mathbf{J}^{+}_b(\mathfrak{q})\hat{\boldsymbol{\xi}}_b\\
        & = \frac{1}{2}\hat{\boldsymbol{\xi}}^\top_b \tilde{\mathbf{M}}(\mathfrak{q})\hat{\boldsymbol{\xi}}_b,
    \end{aligned}
    \end{equation}
    
    \noindent where $\mathbf{M}(\mathfrak{q}) \in \mathbb{S}^{n}_{++}$ is the mass-inertia matrix of the manipulator and $\tilde{\mathbf{M}}(\mathfrak{q}) = \mathbf{J}^{+\top}_b(\mathfrak{q})\mathbf{M}(\mathfrak{q})\mathbf{J}^{+}_b(\mathfrak{q})$, where $\mathbf{J}^+(\mathfrak{q})$ denotes the Moore-Penrose pseudo-inverse of the geometric Jacobian from (\ref{eq:body-vec}). Therefore we define the task-space Lagrangian as:
    \begin{equation}\label{eq:task-lagrange}
    \mathcal{L}(\mathfrak{q}, \hat{\boldsymbol{\xi}}) = \frac{1}{2}\hat{\boldsymbol{\xi}}^\top_b \tilde{\mathbf{M}}(\mathfrak{q})\hat{\boldsymbol{\xi}}_b - \mathcal{G}(\mathfrak{g}).
    \end{equation}
    
    Using the Lagrangian, the task-space Hamiltonian can be found using the Legendre transformation from (\ref{eq:legendre-hamilton}) on the cotangent bundle $(\mathfrak{g}, \mathfrak{p}) \in \mathsf{T}^\ast\mathtt{SE}(3)$:
    \begin{equation}\label{eq:hamiltonian}
    \mathcal{H}(\mathfrak{g}, \mathfrak{p}) = \mathfrak{p}\cdot\hat{\boldsymbol{\xi}} - \mathcal{L}(\mathfrak{q}, \hat{\boldsymbol{\xi}}),
    \end{equation}
    \noindent with the conjugate momenta $\mathfrak{p}$ defined as:
    \begin{equation} \label{eq:momenta}
    \mathfrak{p} = \left [\begin{array}{c}
        \mathfrak{p}_{\boldsymbol{v}}\\
        \mathfrak{p}_{\boldsymbol{\omega}}
    \end{array}\right ] = \frac{\partial\mathcal{L}(\mathfrak{g}, \hat{\boldsymbol{\xi}})}{\partial\hat{\boldsymbol{\xi}}} = \tilde{\mathbf{M}}(\mathfrak{q})\hat{\boldsymbol{\xi}}^\vee = \tilde{\mathbf{M}}(\mathfrak{q})\mathbf{J}_b(\mathfrak{q})\dot{\mathfrak{q}}.
    \end{equation}
    
    To find the closed-form Hamiltonian equations for the manipulator, we leverage the geometry of the $\mathtt{SE}(3)$ group to find the way the state $(\mathfrak{g}, \mathfrak{q}) \in \mathsf{T}^\ast\mathtt{SE}(3)$ evolves in time \cite{global_hamilton_2018}. By combining (\ref{eq:legendre-hamilton}), (\ref{eq:task-lagrange}) and (\ref{eq:momenta}), the manipulator Hamiltonian becomes:
    \begin{equation}\label{eq:hamil-manip}
        \begin{aligned}
            \mathcal{H}(\mathfrak{g}, \mathfrak{p}) & = \mathfrak{p}\cdot\hat{\boldsymbol{\xi}} - \left (\frac{1}{2}\hat{\boldsymbol{\xi}}^\top_b \tilde{\mathbf{M}}(\mathfrak{q})\hat{\boldsymbol{\xi}}_b - \mathcal{G}(\mathfrak{g})\right )\\
            & = \mathfrak{p}^{\top}\tilde{\mathbf{M}}^{-1}(\mathfrak{q})\mathfrak{p} - \frac{1}{2}\mathfrak{p}^\top\tilde{\mathbf{M}}^{-1}(\mathfrak{q})\tilde{\mathbf{M}}(\mathfrak{q})\tilde{\mathbf{M}}^{-1}(\mathfrak{q})\mathfrak{p} + \mathcal{G}(\mathfrak{g})\\
            & = \frac{1}{2}\mathfrak{p}^\top\tilde{\mathbf{M}}^{-1}(\mathfrak{q})\mathfrak{p} + \mathcal{G}(\mathfrak{g}).
        \end{aligned}
    \end{equation}
    From (\ref{eq:lie-state}) and (\ref{eq:hamil-manip}), we can find the Hamiltonian dynamics for the task-space Lie group \cite{global_hamilton_2018}:
    \begin{align}
    \dot{\mathfrak{g}} = &\, \mathsf{T}_e\mathsf{L}_{\mathfrak{g}}\left (\frac{\partial\mathcal{H}(\mathfrak{g}, \mathfrak{p})}{\partial\mathfrak{p}}\right) = \mathsf{T}_e\mathsf{L}_{\mathfrak{g}} \left( \tilde{\mathbf{M}}^{-1} \begin{bmatrix}
        \mathfrak{p}_{\boldsymbol{v}}\\
        \mathfrak{p}_{\boldsymbol{\omega}}
    \end{bmatrix}\right), \label{eq:hamil-dyn-g}\\
    \dot{\mathfrak{p}} = &\, \mathtt{ad}^*_{\hat{\boldsymbol{\xi}}}(\mathfrak{p}) - \mathtt{T}^\ast_e\mathtt{L}_\mathfrak{g} \left ( \frac{\partial \mathcal{H}(\mathfrak{g}, \mathfrak{p})}{\partial \mathfrak{g}} \right ) + \mathbf{B}(\mathfrak{q})\mathbf{u}. \label{eq:hamil-dyn-p} 
    \end{align}
    To obtain the dynamics in port-Hamiltonian form as shown in Definition \ref{def:port-hamil}, the tasks-space coordinates can be vectorised as $\mathfrak{g} = [\mathbf{p}^\top, \mathbf{r}_1^\top, \mathbf{r}_2^\top, \mathbf{r}_3^\top]^\top$, and the conjugate momenta can be split into translational and rotational velocity components from (\ref{eq:momenta}) such that we can define the flattened state vector as:
    \begin{equation}\label{eq:flatten-vec}
    \mathbf{x} =  \begin{bmatrix}
        \mathfrak{g} \\
        \mathfrak{p}
        \end{bmatrix} = \begin{bmatrix} \mathbf{p}^\top,\mathbf{r}_1^\top, \mathbf{r}_2^\top, \mathbf{r}_3^\top, \mathfrak{p}_v, \mathfrak{p}_\omega \end{bmatrix}^\top.
    \end{equation}
    The state dynamics can therefore be found as \cite{global_hamilton_2018}:
    \begin{align}
        \dot{\mathbf{p}} & = \mathbf{R}\frac{\partial\mathcal{H}}{\partial\mathfrak{p}_v}, \\
        \dot{\mathbf{r}}_i & = \mathbf{r}_i \times \frac{\partial\mathcal{H}}{\partial\mathfrak{p}_\omega} \ \text{for} \ i=1,2,3, \\
        \dot{\mathfrak{p}}_v & = \mathfrak{p}_v \times \frac{\partial\mathcal{H}}{\partial\mathfrak{p}_\omega} - \mathbf{R}^\top\frac{\partial\mathcal{H}}{\partial\mathbf{p}} + \mathbf{b}_v(\mathfrak{q})\mathbf{u}, \\
        \dot{\mathfrak{p}}_\omega & = \mathfrak{p}_\omega\times\frac{\partial\mathcal{H}}{\partial\mathfrak{p}_\omega} + \mathfrak{p}_v\times\frac{\partial\mathcal{H}}{\partial\mathfrak{p}_v} + \left [ \sum^3_{i=1} \mathbf{r}_i\times \frac{\partial\mathcal{H}}{\partial\mathbf{r}_i} \right ] + \mathbf{b}_\omega(\mathfrak{q})\mathbf{u}.
    \end{align}
    \begin{proposition}\label{prop:energy-conservation}
        When $\mathbf{u} = \mathbf{0}$ (zero control input to the system), the Hamiltonian in (\ref{eq:hamil-manip}) ensures the conservation of energy, such that $\frac{\text{d}\mathcal{H}}{\text{d}t} = 0$.
    \end{proposition}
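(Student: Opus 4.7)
My plan is to compute $\tfrac{\mathrm{d}\mathcal{H}}{\mathrm{d}t}$ directly by the chain rule along trajectories of the closed loop and show that, after substituting the Lie-group Hamiltonian dynamics (\ref{eq:hamil-dyn-g})--(\ref{eq:hamil-dyn-p}) with $\mathbf{u}=\mathbf{0}$, every term cancels in pairs by virtue of (i) the duality pairing between $\mathsf{T}_e\mathsf{L}_{\mathfrak{g}}$ and $\mathsf{T}^{\ast}_e\mathsf{L}_{\mathfrak{g}}$ of Definition \ref{def:dual-map} and (ii) the antisymmetry of the Lie bracket.

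Concretely, I would start from
\begin{equation*}
\frac{\mathrm{d}\mathcal{H}}{\mathrm{d}t} \;=\; \left\langle \frac{\partial\mathcal{H}}{\partial\mathfrak{g}},\,\dot{\mathfrak{g}}\right\rangle + \left\langle \frac{\partial\mathcal{H}}{\partial\mathfrak{p}},\,\dot{\mathfrak{p}}\right\rangle,
\end{equation*}
then substitute $\dot{\mathfrak{g}} = \mathsf{T}_e\mathsf{L}_{\mathfrak{g}}(\partial_{\mathfrak{p}}\mathcal{H})$ and, under $\mathbf{u}=\mathbf{0}$, $\dot{\mathfrak{p}} = \mathtt{ad}^{\ast}_{\hat{\boldsymbol{\xi}}}(\mathfrak{p}) - \mathsf{T}^{\ast}_e\mathsf{L}_{\mathfrak{g}}(\partial_{\mathfrak{g}}\mathcal{H})$. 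Applying the dual-map identity from Definition \ref{def:dual-map} with $\eta = \partial_{\mathfrak{g}}\mathcal{H}$ and $\xi = \partial_{\mathfrak{p}}\mathcal{H}$ shows that the first term exactly equals $\langle \mathsf{T}^{\ast}_e\mathsf{L}_{\mathfrak{g}}(\partial_{\mathfrak{g}}\mathcal{H}),\,\partial_{\mathfrak{p}}\mathcal{H}\rangle$, which cancels the corresponding potential-gradient contribution coming from $\dot{\mathfrak{p}}$.

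What remains is the kinetic/coadjoint term $\langle \partial_{\mathfrak{p}}\mathcal{H},\,\mathtt{ad}^{\ast}_{\hat{\boldsymbol{\xi}}}(\mathfrak{p})\rangle$. Here I would use the defining relation of the coadjoint to push this back to the algebra, $\langle \partial_{\mathfrak{p}}\mathcal{H},\,\mathtt{ad}^{\ast}_{\hat{\boldsymbol{\xi}}}(\mathfrak{p})\rangle = \langle \mathtt{ad}_{\hat{\boldsymbol{\xi}}}(\partial_{\mathfrak{p}}\mathcal{H}),\,\mathfrak{p}\rangle$, and then observe via (\ref{eq:momenta}) that $\partial_{\mathfrak{p}}\mathcal{H} = \tilde{\mathbf{M}}^{-1}(\mathfrak{q})\mathfrak{p}$ coincides (under $\hat{\cdot}$) with the twist $\hat{\boldsymbol{\xi}}$ itself, whence by (\ref{eq:adjoint}) the bracket reduces to $[\hat{\boldsymbol{\xi}},\hat{\boldsymbol{\xi}}]=0$. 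Equivalently, one can read this off from Definition \ref{def:lie-poisson-bracket}: because the conservative dynamics are exactly those generated by the Lie-Poisson bracket with $\mathcal{H}$, we have $\dot{\mathcal{H}}=\{\mathcal{H},\mathcal{H}\}=0$ by the skew-symmetry of $\{\cdot,\cdot\}$.

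The only genuine obstacle is bookkeeping: one must be careful that the pairing used in the chain rule on $\mathsf{T}^{\ast}\mathtt{SE}(3)$ agrees with the trace pairing of Definition \ref{def:pairing} under the $(\hat{\cdot},\vee)$ identification, and that the left-trivialised momentum $\mathfrak{p}$ and velocity $\hat{\boldsymbol{\xi}}$ are treated consistently so that the identification $\partial_{\mathfrak{p}}\mathcal{H}\leftrightarrow \hat{\boldsymbol{\xi}}$ really does hold. Once those conventions are fixed, the cancellation is purely algebraic and the zero-input dissipation-free flow preserves $\mathcal{H}$.
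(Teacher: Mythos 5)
Your proposal is correct and follows essentially the same route as the paper's Appendix~\ref{app:energy-conservation}: chain rule on $\mathcal{H}$, substitution of (\ref{eq:hamil-dyn-g})--(\ref{eq:hamil-dyn-p}), cancellation of the configuration-gradient terms via the dual-map pairing of Definition~\ref{def:dual-map}, and annihilation of the coadjoint term through $\langle \hat{\boldsymbol{\xi}},\mathtt{ad}^{\ast}_{\hat{\boldsymbol{\xi}}}(\mathfrak{p})\rangle = \langle [\hat{\boldsymbol{\xi}},\hat{\boldsymbol{\xi}}],\mathfrak{p}\rangle = 0$. The only cosmetic difference is that the paper carries the input term through to obtain the passivity identity $\dot{\mathcal{H}} = \langle \hat{\boldsymbol{\xi}}, \mathbf{B}(\mathfrak{q})\mathbf{u}\rangle$ before setting $\mathbf{u}=\mathbf{0}$, and explicitly notes $\partial\mathcal{H}/\partial t = 0$, which you assume implicitly.
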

    \begin{proof}
        We provide the proof of this result in Appendix \ref{app:energy-conservation}.
    \end{proof}
    
    With $\dot{\mathbf{x}} = \begin{bmatrix}
        \dot{\mathfrak{g}}\ & \dot{\mathfrak{q}}
    \end{bmatrix}^\top$, the port-Hamiltonian dynamics in Definition \ref{def:port-hamil} can be defined as:
    \begin{equation}\label{eq:pH-manip}
    \dot{\mathbf{x}} = \underbrace{\left [ \begin{array}{cc}
        0 & \mathfrak{g}^{\times}\\
        -\mathfrak{g}^{\times\top} & \mathfrak{p}^{\times}
    \end{array} \right ]}_{\mathcal{J}(\mathfrak{g},\mathfrak{p})}\nabla\mathcal{H}(\mathfrak{g}, \mathfrak{p}) + \begin{bmatrix}
        0 \\
        \mathbf{B}(\mathfrak{q})
    \end{bmatrix}\mathbf{u},
    \end{equation}
    where 
    \begin{equation}
        \mathfrak{g}^\times = \left[ \begin{array}{cccc}
    \mathbf{R}^\top\, & 0 & 0 & 0 \\
    0 & \hat{\mathbf{r}}_1^\top & \hat{\mathbf{r}}_2^\top & \hat{\mathbf{r}}_3^\top
    \end{array} \right]^\top, \ \mathfrak{p}^\times = \left[\begin{array}{cc}
    0 & \hat{\mathfrak{p}_v} \\
    \hat{\mathfrak{p}}_v & \hat{\mathfrak{p}}_\omega
    \end{array}\right]
    \end{equation}
    provide us with the skew-symmetric interconnection matrix $\mathcal{J}(\mathfrak{g}, \mathfrak{p})$ \cite{schaft_port_2004,duong_hamiltonian_2021}. This equation bears similarity in form to the stochastic dynamics in (\ref{eq:sub:stoch-dynamics}), which allows the formulation of the port-Hamiltonian dynamics as a stochastic control problem \cite{duong_csl_2022}. These dynamics are generalised such that they can be used in the control design for any \textit{fully-actuated} multi-body system, such as continuum-style robots \cite{continuum_dynamics_2024} and vehicle suspension \cite{vehicle_dynamics_1999}, provided that there exists a forward's kinematic map $\boldsymbol{g}(\mathfrak{q})$ allowing the computation of the end-effector pose in $\mathtt{SE}(3)$.
    
    \begin{figure*}[t]
    	\centering
    	\includegraphics[width=0.6\textwidth]{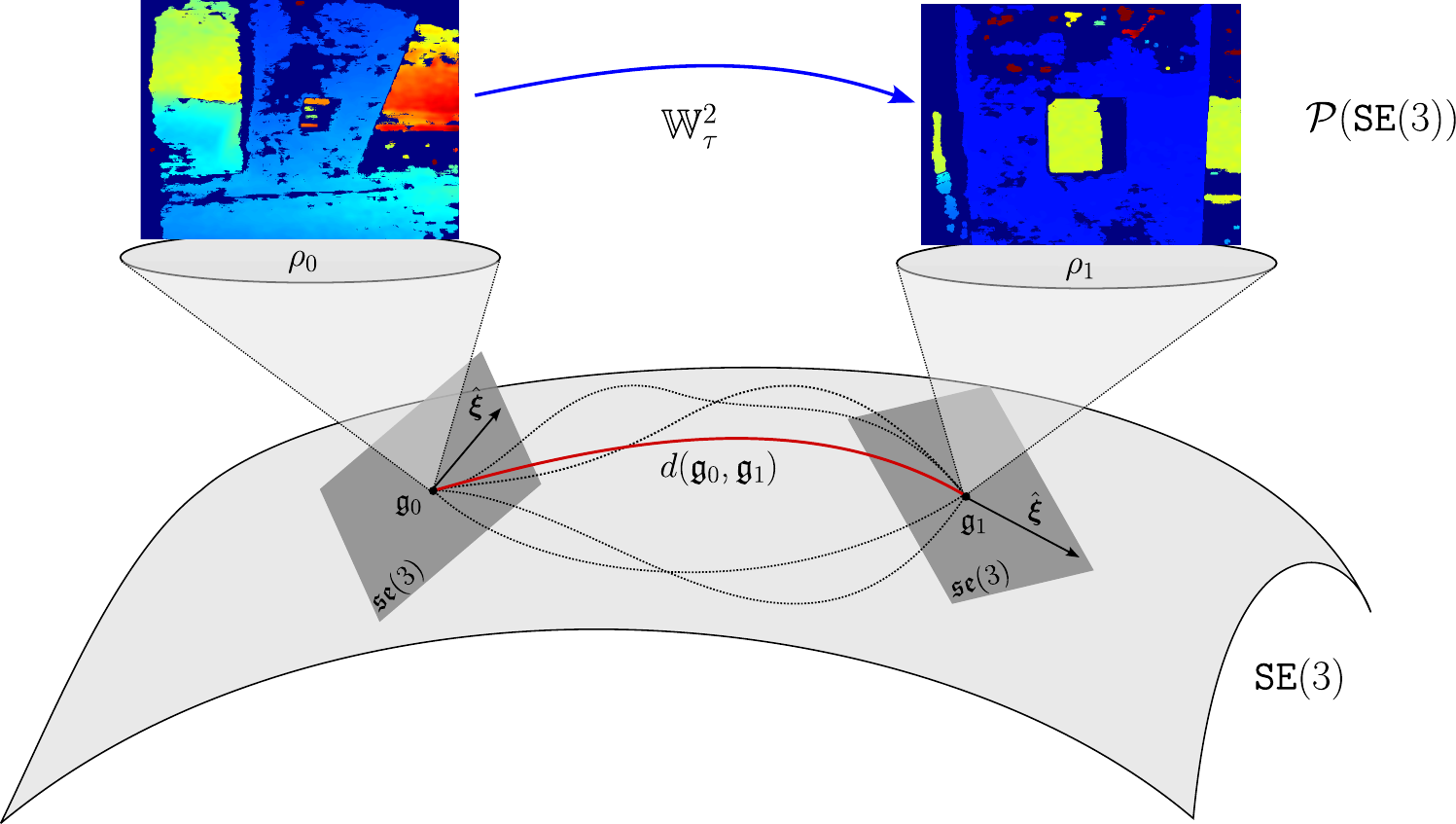}
    	\caption{Example of the depth maps supported on the $\mathtt{SE}(3)$ Lie group. The initial depth map $\rho_0 \in \mathcal{P}(\mathtt{SE}(3))$ is supported on the initial pose $\mathfrak{g}_0 \in \mathtt{SE}(3)$, and the target depth map $\rho_1 \in \mathcal{P}(\mathtt{SE}(3))$ is supported on the target pose $\mathfrak{g}_1 \in \mathtt{SE}(3)$. The geodesic distance $d(\mathfrak{g}_0, \mathfrak{g}_1)$ maps the shortest distance in the group space whilst the $p$-Wasserstein distance $\mathbb{W}_\tau^2$ maps the initial to target depth maps in probability space. The colours in the depth maps correspond to the distance from the camera, with blue particles indicating close-range measurements and green particles indicating the distance measurements within the limit of the camera range.}
    	\label{fig:curvature-example}
    \end{figure*}
    
    \subsection{Transport Maps of Depth Clouds}\label{sec:sub:features}
    
    Modern approaches to visual servoing have utilised better depth estimation and reliable feature extraction of point clouds over traditional 2-dimensional cameras \cite{2d_vs_1999,vs_point_cloud_2020,deep_vs_2022}. Depth maps lend themselves well to the mass transport problem, as they can be represented as distributions on probability spaces \cite{ot_change_detect_2023,ot_metric_point_2024}. \\
    
    The most common camera type for deployed systems is the RGB-D camera, which utilises stereo vision to compute distance estimates and creates a dense depth map of similar accuracy and density to that of more expensive LiDAR systems \cite[Chap. 40.5.3]{handbook_robotics_2008}. A depth cloud $P$ is a finite set of points, where each point relates to a camera pixel in the resolution range $[\mathbb{X}^n, \mathbb{Y}^m]$ and the distance estimate:
    \begin{equation}
        P = \left \{ [x, y, Z] \in \mathbb{R}^3_{+}\, \lvert\, x \in \mathbb{X}^n, y \in \mathbb{Y}^m, Z \sim f(x, y) \right \},
    \end{equation}
    where $n$ and $m$ are the resolution bounds for the camera in the form $n\times m$. For both the initial $P_\mu$ and goal pose $P_\nu$, the point cloud of the pose is the set of features that are present in the pose of the camera. To aid with notational compactness, for this section of the report we denote $\mathtt{G}$ as the Special Euclidean group $\mathtt{SE}(3)$ mentioned previously. \\
    
    At any given point $\mathfrak{g} \in \mathtt{TG}$ of the robot task space, the depth cloud from the camera provides us with the reading $P_\mu$. The target pose $\mathfrak{g}^\ast \in \mathtt{TG}$ possesses a target depth cloud $P_\nu$, where both of the depth clouds can be shown as discrete measures for each point in the depth cloud: 
    \begin{equation} \label{eq:dirac-measures}
        \mu = \sum_{i=1}^{n}\sum_{j=1}^{m} \mathbf{u}_{ij} \delta_{x_i y_j}, \quad \nu = \sum_{i=1}^{n} \sum_{j=1}^{m} \mathbf{v}_{ij} \delta_{x_i y_j},
    \end{equation}
    where $(x_i, y_j)$ corresponds to pixel locations in the frame, and $\delta$ is the Kronecker delta function. As the depth values used in the depth cloud are signed positive, we say they exist on the space of finite signed measures $\mathcal{M}(\mathtt{G})$, forming a positive cone $\mathcal{M}_{+}(\mathtt{G}) = \left\{ \mu \in \mathcal{M}(\mathtt{G}) : \mu \geq 0 \right\}$, giving a probability measure space of:
    \begin{equation}
        \mathcal{P}(\mathtt{G}) = \left\{ \mu \in \mathcal{M}_+(\mathtt{G}) : \mu(\mathtt{G}) = 1 \right\}.
    \end{equation}
    These measures exist on the Wasserstein space $\mathcal{P}_p({\mathtt{G}})$, which consists of the set of all probability measures $\mu \in \mathcal{P}(\mathtt{G})$:
    \begin{equation}
        \mathcal{P}_p(\mathtt{G}) = \left \{ \mu \in \mathcal{P}(\mathtt{G}) : \int_{\mathtt{G}} d^p(\mathfrak{g}_0, \mathfrak{g}_1)\;\text{d}\mu(x) < +\infty \right \}, \quad \forall p \geq 1,
    \end{equation}
    with $d(\mathfrak{g}_0, \mathfrak{g}_1)$ defining the geodesic distance between two points $\mathfrak{g}_0, \mathfrak{g}_1 \in \mathtt{G}$:
    \begin{equation}
        d(\mathfrak{g}_0, \mathfrak{g}_1) \triangleq \inf_{\gamma \in PC([0,1])} \int_{0}^{1} \sqrt{\dot{\gamma}^\top(t)\mathbb{G}_{\gamma(t)}\dot{\gamma}(t)}\; \text{d}t 
    \end{equation}
    where $\gamma(0) = \mathfrak{g}_0$ and $\gamma(1) = \mathfrak{g}_1$. Our choice of metric tensor $\mathbb{G}_{\gamma(t)}$ and computation of the geodesic distance $d(\mathfrak{g}_0, \mathfrak{g}_1)$ is detailed in Appendix \ref{app:metric-tensor}. The $p$-Wasserstein distance from (\ref{eq:wasserstein}) can be formulated for the two depth cloud measures $(\mu, \nu) \in \mathcal{P}_p(\mathtt{G})$:
    \begin{equation}\label{eq:wasserstein-space}
        \mathbb{W}^p(\mu, \nu) \triangleq \left( \inf_{\pi \in \Pi(\mathtt{G}\times \mathtt{G}; \mu,\nu)} \int_{\mathtt{G}\times \mathtt{G}} d^p(x,y) \text{d}\pi(x,y)  \right)^{\frac{1}{p}}.
    \end{equation}
    This distance, when applied to the $\mathcal{P}_p(\mathtt{G})$ space, describes the $L^p(\mathtt{G})$-Wasserstein space, where the distance is an optimal transport problem provided that the cost function $c(x,y) = d^p(x, y)$ is lower semi-continuous \cite{ot_lie_2024}. \\
    
    Most optimal transport problems can be solved with the assumption that the total mass is conserved during operation (i.e. $\int_{\mathtt{G}}\text{d}\mu = \int_{\mathtt{G}}\text{d}\nu$). However, this assumption is not necessarily true when working with point clouds as distances are bounded from below (i.e. $\mu \nleq 0$) but aren't bounded from above\footnote{Theoretically, one could provide an upper bound and maintain equivalent mass transportation based on the maximum range of the camera under the assumption that the camera starts at this distance. We leave this as an avenue for future research.}, which leads to $\Pi(\mu, \nu) = \emptyset$ and $\mathbb{W}^p(\mu, \nu) = +\infty$. To avoid this problem, we modify (\ref{eq:wasserstein}) to use the unbalanced Kantorovich relaxation \cite{ot_book_2015}:
    \begin{equation}\label{eq:unbalanced-wasserstein}
        \begin{aligned}
            \mathbb{W}_\tau^p(\mu, \nu) = & \bigg( \inf_{\pi \in \Pi(\mathtt{G}\times \mathtt{G}; \mu,\nu)} \int_{\mathtt{G}\times \mathtt{G}} d^p(x,y) \text{d}\pi(x,y) + \tau\mathbb{D}_\varphi(P_{1,\sharp}\pi | \mu) + \tau\mathbb{D}_\varphi(P_{2,\sharp}\pi|\nu)\bigg)^{\frac{1}{p}}.
        \end{aligned}
    \end{equation}
    When the scaling value $\tau \rightarrow +\infty$, assuming that the depth maps have equal density such that $\mu = \nu$, we recover the original definition for the $p$-Wasserstein distance \cite{comp_ot_2019}. Figure \ref{fig:curvature-example} provides an example of the target and initial depth maps supported on $\mathtt{SE}(3)$. 
    
    \subsection{Passivity-Based Control on $\mathtt{SE}(3)$}\label{sec:sub:closed-loop}
    
    The port-Hamiltonian dynamics in (\ref{eq:pH-manip}) describe an open-loop system that has a minimum energy derived from to (\ref{eq:hamil-manip}). The underlying principle with visual servoing - in particular PBVS - is to control the system to a desired stabilisation point $(\mathfrak{g}^\ast, \mathfrak{p}^\ast) \in \mathtt{T^\ast SE}(3)$. However, there is no guarantee that there exists an energy minimum in $\mathcal{H}(\mathfrak{g}, \mathfrak{p})$ at the desired stabilisation point. Furthermore, from Proposition \ref{prop:energy-conservation}, the passivity of the system dictates that the outputs are shaped by the generalised velocities only. As such, it is beneficial to reshape the Hamiltonian dynamics to allow the injection of energy to reach the desired total energy $\mathcal{H}_d(\mathfrak{g}, \mathfrak{p})$ \cite{ph_stabil_2002}. This gives a modified Hamiltonian of:
    \begin{equation}\label{eq:hamil-inject}
        \mathcal{H}_d(\mathfrak{g}, \mathfrak{p}) = \mathcal{H}(\mathfrak{g}, \mathfrak{p}) + \mathcal{H}_a(\mathfrak{g}, \mathfrak{p}),
    \end{equation}
    where $\mathcal{H}_a(\mathfrak{g}, \mathfrak{p})$ is the energy injection to the system to reach the desired equilibrium point. From Proposition \ref{prop:energy-conservation}, as our system is passive we utilise a \textit{passivity-based controller} (PBC) \cite{ph_stabil_2002,schaft_port_2004} to inject energy into our system. The control input then seeks to provide an optimal input that minimises $\mathcal{H}_d(\mathfrak{g}, \mathfrak{p})$ as the desired equilibrium:
    \begin{equation}
        (\mathfrak{g}^\ast, \mathfrak{p}^*) = \argmin_{(\mathfrak{g}, \mathfrak{p}) \in \mathtt{T}^\ast\mathtt{SE}(3)} \big\{ \mathcal{H}_d(\mathfrak{g}, \mathfrak{p})\big\},
    \end{equation}
    with the port-Hamiltonain dynamics of $\mathcal{H}_d$ being:
    \begin{equation}\label{eq:target-ph}
        \begin{bmatrix}
            \dot{\mathfrak{g}} \\
            \dot{\mathfrak{p}}
        \end{bmatrix} = \left[\mathcal{J}_d(\mathfrak{g}, \mathfrak{p}) - \mathcal{R}_d(\mathfrak{g}, \mathfrak{p})\right] \nabla\mathcal{H}_d(\mathfrak{g}, \mathfrak{p}).
    \end{equation}
    By equating (\ref{eq:pH-manip}) and (\ref{eq:target-ph}), we obtain the following control requirement:
    \begin{equation}\label{eq:control-req}
        \begin{aligned}
            \mathbf{u} = \mathbf{B}^{+}(\mathfrak{q})\big([\mathcal{J}_d(\mathfrak{g}, \mathfrak{p}) - &\, \mathcal{R}_d(\mathfrak{g}, \mathfrak{p})]\nabla\mathcal{H}_d(\mathfrak{g}, \mathfrak{p}) - \mathcal{J}(\mathfrak{g}, \mathfrak{p})\nabla\mathcal{H}(\mathfrak{g}, \mathfrak{p}))\big),
        \end{aligned}
    \end{equation}
    where $\mathbf{B}^{+}(\mathfrak{q}) = (\mathbf{B}^\top\mathbf{B})^{-1}\mathbf{B}^\top$ denotes the Moore-Penrose pseudo-inverse of the actuation matrix. The desired dissipation matrix is defined using the damping injection matrix $\mathbf{K}_d$ as $\mathcal{R}_d(\mathfrak{g}, \mathfrak{p}) = \mathbf{B}(\mathfrak{q})\mathbf{K}_d\mathbf{B}^\top(\mathfrak{q})$. The state feedback control is defined as the sum of the energy-shaping and damping injection terms $\mathbf{u} = \mathbf{u}_{\text{ES}} + \mathbf{u}_{\text{DI}}$. These are determined from (\ref{eq:control-req}) as:
    \begin{align}
        \mathbf{u}_{\text{ES}} = &\, \mathbf{B}^{+}(\mathfrak{q})\left[\mathcal{J}_d(\mathfrak{g}, \mathfrak{p})\nabla\mathcal{H}_d(\mathfrak{g}, \mathfrak{p}) - \mathcal{J}(\mathfrak{g}, \mathfrak{p})\nabla\mathcal{H}(\mathfrak{g}, \mathfrak{p})\right],\label{eq:energy-gain} \\
        \mathbf{u}_{\text{DI}} = &\, -\mathbf{K}_d\mathbf{B}^\top(\mathfrak{q})\nabla\mathcal{H}_d(\mathfrak{g}, \mathfrak{p}).\label{eq:damping-gain}
    \end{align}
    
    For the design of the closed-loop dynamics, the options for computing stable control gains are based on fixing either the $\mathcal{J}_d$ and $\mathcal{R}_d$ matrices or $\mathcal{H}_d$ \cite{port_hamilton_overview_2014}. As finding matrices remains an open problem in port-Hamilton control theory, we instead select matrices that satisfy the matching condition
    \begin{equation}\label{eq:matching-condition}
        \begin{aligned}
            \mathbf{B}^\bot \bigg( [ (\mathcal{J}(\mathfrak{g}, \mathfrak{p})& - \mathcal{R}(\mathfrak{g}, \mathfrak{p}))\nabla\mathcal{H}(\mathfrak{g}, \mathfrak{p}) ] - \left[ (\mathcal{J}_d(\mathfrak{g}, \mathfrak{p}) - \mathcal{R}_d(\mathfrak{g}, \mathfrak{p}))\nabla\mathcal{H}_d(\mathfrak{g}, \mathfrak{p}) \right] \bigg) = 0,
        \end{aligned}
    \end{equation}
    where $\mathcal{B}^\bot$ is the maximal-rank left annihilator of $\mathcal{B}$ such that $\mathcal{B}^\bot\mathcal{B}=0$. The target phase-space coordinates $(\mathfrak{g}^\ast, \mathfrak{p}^\ast)$ can be flattened in a similar manner to (\ref{eq:flatten-vec}), which allows $\mathcal{J}_d$ to be formulated as:
    \begin{equation}\label{eq:desired-interconnection}
        \begin{aligned}
            \mathcal{J}_d(\mathfrak{g},\mathfrak{p})& \, = \begin{bmatrix}
                \mathbf{0} & \mathbf{J}_1 \\
                -\mathbf{J}_1^\top & \mathbf{0}
            \end{bmatrix}, \\
            \text{where } \mathbf{J_1} = & \, \begin{bmatrix}
                \mathbf{R}^\top & 0 & 0 & 0 \\
                0 & \hat{\mathbf{r}}^{\ast\top}_{1} & \hat{\mathbf{r}}^{\ast\top}_{2} & \hat{\mathbf{r}}^{\ast\top}_{3}
            \end{bmatrix}^\top.
        \end{aligned}
    \end{equation}
    We can similarly choose $\mathcal{R}_d(\mathfrak{g},\mathfrak{p}) = \begin{bmatrix}
        0 & 0 \\
        0 & \mathbf{K}_d
    \end{bmatrix}$, which satisfies both $\mathcal{J}_d = -\mathcal{J}_d^\top$ and $\mathcal{R}_d = \mathcal{R}_d^\top \succeq 0$ whilst including the damping gain $\mathbf{K}_d$ \cite{duong_hamiltonian_2021}. For our energy injection term $\mathcal{H}_a$, the simplest way to guarantee that the minimum energy exists at $(\mathfrak{g}^\ast, \mathfrak{p}^\ast) = (\mathfrak{g}^\ast, 0)$ is to use energy shaping gains $\mathbf{K}_\mathbf{p}, \mathbf{K}_\mathbf{R} \succ 0$:
    \begin{equation}\label{eq:ha-shaping}
        \begin{aligned}
            \mathcal{H}_a(\mathfrak{g}, \mathfrak{p}) = -\mathcal{H}(\mathfrak{g}, \mathfrak{p}) + \frac{1}{2}(\mathfrak{p}-\mathfrak{p}^\ast)^\top\tilde{\mathbf{M}}^{-1}(\mathfrak{q})(\mathfrak{p}-\mathfrak{p}^\ast) + \frac{1}{2}(\mathbf{p}-\mathbf{p}^\ast)^\top\mathbf{K}_\mathbf{p}(\mathbf{p}-\mathbf{p}^\ast) + \frac{1}{2}tr(\mathbf{K}_\mathbf{R}(\mathbf{I}-\mathbf{R}^{\ast\top}\mathbf{R})).
        \end{aligned}
    \end{equation}
    By assuming that $\mathcal{G}_d(\mathfrak{g}^\ast) = \mathcal{G}(\mathfrak{g}^\ast)$, the desired Hamiltonian becomes:
    \begin{equation}\label{eq:hd-desired}
        \begin{aligned}
            \mathcal{H}_d(\mathfrak{g}, \mathfrak{p})& = \frac{1}{2}(\mathfrak{p}-\mathfrak{p}^\ast)^\top\tilde{\mathbf{M}}^{-1}(\mathfrak{q})(\mathfrak{p}-\mathfrak{p}^\ast) + \frac{1}{2}(\mathbf{p}-\mathbf{p}^\ast)^\top\mathbf{K}_\mathbf{p}(\mathbf{p}-\mathbf{p}^\ast) + \frac{1}{2}\text{tr}(\mathbf{K}_\mathbf{R}(\mathbf{I}-\mathbf{R}^{\ast\top}\mathbf{R}))
        \end{aligned}
    \end{equation}
    Substituting (\ref{eq:hd-desired}) into (\ref{eq:control-req}), (\ref{eq:energy-gain}) and (\ref{eq:damping-gain}), it follows that:
    \begin{align}
        \mathbf{u}_{\text{ES}}&\; = \mathbf{B}^+\left[\mathfrak{g}^{\times\top}\frac{\partial\mathcal{H}}{\partial\mathfrak{g}} - \mathfrak{p}^{\times}\boldsymbol{\xi}- \mathbf{J}_1^\top \frac{\partial\mathcal{H}_d}{\partial\mathfrak{g}} \right],\label{eq:es-final}\\
        \mathbf{u}_{\text{DI}} & = -\mathbf{K}_d\mathbf{B}^\top\boldsymbol{\xi}.\label{eq:di-final}
    \end{align}
    We can further simplify (\ref{eq:es-final}) as:
    \begin{equation}
        \mathbf{u}_{\text{ES}} = \mathbf{B}^+(\mathfrak{q})\left[ \mathfrak{g}^{\times\top}\nabla_{\mathfrak{g}}\mathcal{G}(\mathfrak{g}) - \mathfrak{p}^\times\boldsymbol{\xi} - \text{e}(\mathfrak{g}, \mathfrak{g}^\ast) \right]  ,
    \end{equation}
    where:
    \begin{equation}
        \text{e}(\mathfrak{g}, \mathfrak{g}^\ast) \triangleq \mathbf{J}_1^\top\frac{\partial\mathcal{H}_d}{\partial\mathfrak{g}} = \begin{bmatrix}
            \mathbf{R}^\top\mathbf{K}_{\mathbf{p}}(\mathbf{p}-\mathbf{p}^\ast)\\
            \frac{1}{2}(\mathbf{K}_\mathbf{R}\mathbf{R}^{\ast\top}\mathbf{R} - \mathbf{R}^\top\mathbf{R}^\ast\mathbf{K}_{\mathbf{R}}^\top)^\vee
        \end{bmatrix}
    \end{equation}
    denotes the geometric error between the desired and actual poses. Note that $\nabla_\mathfrak{g} \mathcal{G}(\mathfrak{g}) = \mathbf{J}_b^{-\intercal}\mathcal{G}(\mathfrak{q})$ is the gravity vector defined in the $\mathtt{SE}(3)$ task space \cite{contact_rich_2024}. The control framework here can be thought of as proportional-derivative control with gravity-compensation \cite{port_hamilton_overview_2014}, where the matrices $\mathbf{K}_\mathbf{p}, \mathbf{K}_\mathbf{R}, \mathbf{K}_d \succ 0$ are chosen to exhibit the desired performance. 
    \begin{theorem} \label{thm:cl-stability}
        Consider the port-Hamiltonian dynamics for the multi-body system in (\ref{eq:pH-manip}). Given that the matching condition (\ref{eq:matching-condition}) is satisfied and the gain matrices $\mathbf{K}_\mathbf{p}, \mathbf{K}_\mathbf{R}$ and $\mathbf{K}_d$ are positive definite, for the chosen time-invariant Lyapunov function in (\ref{eq:hd-desired}), the system exhibits closed-loop asymptotic stability for all $(\mathfrak{g}, \mathfrak{p}) \in \mathtt{T}^\ast\mathtt{SE}(3)$.
    \end{theorem}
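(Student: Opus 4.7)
The approach is the standard passivity-based argument: exhibit $\mathcal{H}_d$ from (\ref{eq:hd-desired}) as a time-invariant Lyapunov function for the closed-loop system in (\ref{eq:target-ph}), then invoke LaSalle's invariance principle. The matching condition (\ref{eq:matching-condition}) guarantees that the control law (\ref{eq:es-final})--(\ref{eq:di-final}) shapes the open-loop dynamics (\ref{eq:pH-manip}) into exactly the desired port-Hamiltonian form (\ref{eq:target-ph}), so the Lyapunov analysis can be performed directly on that target form rather than on the original one.

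First I would verify that $\mathcal{H}_d$ is positive definite about the equilibrium $(\mathfrak{g}^\ast, \mathfrak{p}^\ast = 0)$. The kinetic term $\tfrac{1}{2}(\mathfrak{p}-\mathfrak{p}^\ast)^\top \tilde{\mathbf{M}}^{-1}(\mathfrak{q})(\mathfrak{p}-\mathfrak{p}^\ast)$ is non-negative because $\tilde{\mathbf{M}}(\mathfrak{q})\in\mathbb{S}^n_{++}$ from (\ref{eq:kinetic}). The translational term is a positive quadratic form in $(\mathbf{p}-\mathbf{p}^\ast)$ with $\mathbf{K}_\mathbf{p}\succ 0$. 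The rotational term $\tfrac{1}{2}\text{tr}(\mathbf{K}_\mathbf{R}(\mathbf{I}-\mathbf{R}^{\ast\top}\mathbf{R}))$ is the canonical attitude error function on $\mathtt{SO}(3)$ and is non-negative for $\mathbf{K}_\mathbf{R}\succ 0$, vanishing iff $\mathbf{R}=\mathbf{R}^\ast$. Together these give $\mathcal{H}_d\geq 0$ with a strict minimum at $(\mathfrak{g}^\ast, 0)$ in a neighbourhood of the target.

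Next I would compute $\dot{\mathcal{H}}_d$ along the closed-loop trajectories by substituting (\ref{eq:target-ph}) into the chain rule. Using the skew-symmetry of $\mathcal{J}_d$ from (\ref{eq:desired-interconnection}), the interconnection contribution cancels and one obtains $\dot{\mathcal{H}}_d = -\nabla\mathcal{H}_d^\top \mathcal{R}_d \nabla\mathcal{H}_d$, which reduces to a non-positive quadratic in $\boldsymbol{\xi}$ after identifying $\partial\mathcal{H}_d/\partial\mathfrak{p} = \tilde{\mathbf{M}}^{-1}\mathfrak{p} = \boldsymbol{\xi}$ via (\ref{eq:momenta}) and invoking the block structure of $\mathcal{R}_d$ with $\mathbf{K}_d\succ 0$. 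Hence $\dot{\mathcal{H}}_d\leq 0$ with equality exactly on the set $\mathcal{E} = \{(\mathfrak{g},\mathfrak{p}) : \boldsymbol{\xi}=0\} = \{\mathfrak{p}=0\}$. LaSalle's theorem then reduces the problem to characterising the largest invariant subset of $\mathcal{E}$: on $\mathcal{E}$, invariance forces $\dot{\mathfrak{p}}=0$, which by (\ref{eq:target-ph}) requires $\partial\mathcal{H}_d/\partial\mathfrak{g}=0$. This gradient is, modulo the Jacobian $\mathbf{J}_1$, exactly the geometric error $\text{e}(\mathfrak{g},\mathfrak{g}^\ast)$ displayed after (\ref{eq:di-final}); its translational block gives $\mathbf{p}=\mathbf{p}^\ast$ and its rotational block has $\mathbf{R}=\mathbf{R}^\ast$ among its roots, yielding $(\mathfrak{g}^\ast, 0)$ as an isolated invariant point in the relevant sublevel set.

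The main obstacle is the non-contractible topology of $\mathtt{SO}(3)$: the condition $(\mathbf{K}_\mathbf{R}\mathbf{R}^{\ast\top}\mathbf{R} - \mathbf{R}^\top\mathbf{R}^\ast\mathbf{K}_\mathbf{R}^\top)^\vee = 0$ admits additional critical rotations, classically rotations by $\pi$ about the eigenaxes of $\mathbf{K}_\mathbf{R}\mathbf{R}^{\ast\top}$, so genuinely global asymptotic stability over \emph{all} of $\mathtt{T}^\ast\mathtt{SE}(3)$ is topologically precluded. The cleanest remedy is to select $\mathbf{K}_\mathbf{R}$ with distinct eigenvalues; then the undesired critical points are hyperbolic saddles whose stable manifolds form a measure-zero subset, and one obtains almost-global asymptotic stability on $\mathtt{T}^\ast\mathtt{SE}(3)$. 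Otherwise the conclusion must be read as local asymptotic stability on the connected sublevel set of $\mathcal{H}_d$ containing $(\mathfrak{g}^\ast, 0)$ and excluding the antipodal critical points. I would make this distinction explicit, since it is the one subtle step where the geometric nature of $\mathtt{SE}(3)$ genuinely constrains what a Lyapunov argument can deliver.
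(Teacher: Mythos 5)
Your proposal follows essentially the same route as the paper: positive-definiteness of $\mathcal{H}_d$ from (\ref{eq:hd-desired}), computation of $\dot{\mathcal{H}}_d$ along the closed-loop dynamics yielding $-\mathfrak{p}_e^\top\tilde{\mathbf{M}}^{-1}\mathbf{K}_d\tilde{\mathbf{M}}^{-1}\mathfrak{p}_e \leq 0$, and an appeal to LaSalle's invariance principle. However, your treatment is more complete than the paper's in two respects, and the second one exposes a genuine issue with the theorem as stated. First, the paper stops at $\dot{\mathcal{H}}_d \leq 0$ and immediately concludes convergence to $(\mathfrak{g}^\ast,\mathfrak{p}^\ast)$, whereas you correctly carry out the remaining LaSalle step: $\dot{\mathcal{H}}_d = 0$ only pins down $\mathfrak{p}_e = 0$, and one must then show that invariance forces $\partial\mathcal{H}_d/\partial\mathfrak{g} = 0$, i.e.\ that the only trajectories remaining in $\{\mathfrak{p}_e = 0\}$ are critical points of the potential term. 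Second, you correctly observe that the rotational stationarity condition $(\mathbf{K}_\mathbf{R}\mathbf{R}^{\ast\top}\mathbf{R} - \mathbf{R}^\top\mathbf{R}^\ast\mathbf{K}_\mathbf{R}^\top)^\vee = 0$ admits critical points other than $\mathbf{R} = \mathbf{R}^\ast$ (rotations by $\pi$ about the eigenaxes of the relevant matrix), and that continuous global asymptotic stabilisation on a compact configuration manifold such as $\mathtt{SO}(3)$ is topologically impossible. The paper's claim of asymptotic stability ``for all $(\mathfrak{g},\mathfrak{p}) \in \mathtt{T}^\ast\mathtt{SE}(3)$'' therefore overstates what the Lyapunov/LaSalle argument delivers; the correct conclusion is almost-global asymptotic stability (with the undesired equilibria being saddles of measure-zero stable manifold when $\mathbf{K}_\mathbf{R}$ has distinct eigenvalues), or local asymptotic stability on the sublevel set of $\mathcal{H}_d$ excluding the antipodal critical points. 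Your flagging of this distinction is not pedantry --- it is the standard caveat in geometric attitude control and should appear in any rigorous version of this proof.
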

    \begin{proof}\label{proof:lyapunov-stability}
        Let us assume that the manipulator body Jacobian is of full rank:
        \begin{equation}
        	\text{rank}(\mathbf{J}_\mathfrak{b}(\mathfrak{q})) = \max (\mathcal{Q}).
        \end{equation}
        Using (\ref{eq:desired-interconnection}) and the damping gain $\mathbf{K}_d$, we can formulate the closed loop dynamics for the manipulator as:
        \begin{equation}
            \begin{bmatrix}
                \dot{\mathfrak{g}}\\
                \dot{\mathfrak{p}}
            \end{bmatrix} = \begin{bmatrix}
                \mathbf{0} & \mathbf{J}_1 \\
                -\mathbf{J}_1^\top & -\mathbf{K}_d
            \end{bmatrix}\nabla\mathcal{H}_d.
        \end{equation} 
        Let us define the error between the desired and current dynamics given by
        \begin{equation}
            (\mathfrak{g}_e, \mathfrak{p}_e) = ((\mathbf{p}_e, \mathbf{R}_e), \mathfrak{p}_e) = \begin{bmatrix}
            \mathbf{p}-\mathbf{p}^\ast, \mathbf{R}^{\ast\top}\mathbf{R}, \mathfrak{p}-\mathfrak{p}^\ast
        \end{bmatrix}^\top.
        \end{equation} 
        This allows us to reformulate the Lyapunov function (\ref{eq:hd-desired}) as:
        \begin{equation}\label{eq:hd-error}
            \begin{aligned}
                \mathcal{H}_d(\mathfrak{g}_e, \mathfrak{p}_e) =& \frac{1}{2}\mathfrak{p}_e^{\top}\tilde{\mathbf{M}}^{-1}(\mathfrak{q})\mathfrak{p}_e + \frac{1}{2}\mathbf{p}_e^\top\mathbf{K}_\mathbf{p}\mathbf{p}_e + \frac{1}{2}\text{tr}(\mathbf{K}_\mathbf{R}(\mathbf{I} - \mathbf{R}_e)).
            \end{aligned}
        \end{equation}
        As all the values of $\mathbf{R}_e \in \mathtt{SO}(3)$ are less than 1 \cite{duong_hamiltonian_2021}, it holds that:
        \begin{equation}
            \frac{1}{2}\text{tr}(\mathbf{K}_\mathbf{R}(\mathbf{I} - \mathbf{R}_e)) \geq 0,
        \end{equation}
        and that $\mathbf{K}_\mathbf{p}, \mathbf{K}_\mathbf{R}, \mathbf{K}_d \succ 0$. This implies that the Hamiltonian Lyapunov equation $\mathcal{H}_d$ is positive definite, with a minimum value $\mathcal{H}_d = 0$ only when there are no position, rotation, and momentum errors. The time derivative of (\ref{eq:hd-desired}) can be found through the Poisson bracket from Definition \ref{def:lie-poisson-bracket}:
        \begin{equation}
            \begin{aligned}
                \dot{\mathcal{H}}_d(\mathfrak{g}, \mathfrak{p})& = \left\{ \mathcal{H}_d, \mathcal{H} \right\} \\
                & = \left\langle \frac{\partial\mathcal{H}_d}{\partial\mathfrak{g}}, \dot{\mathfrak{g}} \right\rangle + \left\langle \frac{\partial\mathcal{H}_d}{\partial\mathfrak{p}}, \dot{\mathfrak{p}} \right\rangle \\
                & = - (\mathfrak{p}-\mathfrak{p}^\ast)^\top\tilde{\mathbf{M}}^{-1}(\mathfrak{q})\mathbf{K}_d\tilde{\mathbf{M}}^{-1}(\mathfrak{q})(\mathfrak{p}-\mathfrak{p}^\ast).
            \end{aligned}
        \end{equation}
        By using (\ref{eq:hd-error}), it holds that
        \begin{equation}
            \dot{\mathcal{H}}_d(\mathfrak{g}_e, \mathfrak{p}_e) = - \mathfrak{p}_e^\top\tilde{\mathbf{M}}^{-1}(\mathfrak{q})\mathbf{K}_d\tilde{\mathbf{M}}^{-1}\mathfrak{p}_e.
        \end{equation}
        As both $\mathbf{K}_d$ and $\tilde{\mathbf{M}}$ are positive-definite, the Lyapunov derivative is negative-definite $\dot{\mathcal{H}}_d \leq 0$ for all $(\mathfrak{g}, \mathfrak{p}) \in \mathtt{T}^\ast\mathtt{SE}(3)$. As such, by LaSalle's invariance principle \cite{nonlinear_sys_1999}, the Hamiltonian Lyapunov error function converges to 0 and such reaches the desired equilibrium point $(\mathfrak{g}^\ast, \mathfrak{p}^\ast) \in \mathtt{T}^\ast\mathtt{SE}(3)$. 
    \end{proof}
    
    As a result of Theorem \ref{thm:cl-stability}, the Lyapunov function provides asymptotic stability regardless of the choice of $\mathbf{K}_\mathbf{p}, \mathbf{K}_\mathbf{R}$ and $\mathbf{K}_d$. We now discuss how we use the visual features from the depth map to generate a control signal that injects energy into the closed-loop Hamiltonian and mitigates disturbances. 
    
    \subsection{Injecting Energy via Optimal Transport} \label{sec:sub:energy-ot}
    
    So far our controller mimics the standard approach for geometric PD-based control for port-Hamiltonian systems described in  \cite{duong_hamiltonian_2021} and \cite{port_hamilton_overview_2014}, bringing our control process into the PBVS domain on geometric dynamics. However, this control law doesn't account for features extracted from the depth map, leading us back to the optimal transport problem described in Section \ref{sec:sub:features}. Following work in \cite{duong_csl_2022}, the control input $\mathbf{u}$ can be reformulated to include a disturbance compensation term $\mathbf{u}_{\text{DC}}$:
    \begin{equation} \label{eq:vs-pid}
        \mathbf{u} = \mathbf{u}_{\text{ES}} + \mathbf{u}_{\text{DI}} + \mathbf{u}_{\text{DC}}.
    \end{equation}
    The control input $\mathbf{u}_{\text{DC}}$ is dependent on the current measure $\rho_t$ from the RGB camera, which can be formulated as a Hamilton-Jacobi continuity equation \cite{ot_book_2015}:
    \begin{equation}
    	\frac{\partial\rho_t}{\partial t} + \nabla(\rho_t\boldsymbol{\xi}) = s_t,
    \end{equation}
    where $s_t$ is the source term due to the unbalanced measures and $\rho_t\boldsymbol{\xi}$ is the mass momentum of the measure under the influence of the task-space twist $\boldsymbol{\xi}$. This produces a set of curves:
    \begin{equation}
    	\begin{aligned}
    		\bar{\mathcal{C}}(\rho_{0}, \rho_{1}) = \bigg\{ (\rho_t, (\rho_t\boldsymbol{\xi}), s_t) : \frac{\partial\rho_t}{\partial t} + \nabla(\rho_t\boldsymbol{\xi}) = s_t,
    		\rho_{t=0} = \mu, \rho_{t=1}=\nu \bigg\}.
    	\end{aligned}
    \end{equation} 
    The $p$-Wasserstein for these curves derives the mass flow due to the manipulator:
    \begin{equation}
    	\mathbb{W}^2_{\tau} = \min_{(\rho_t, (\rho_t\boldsymbol{\xi}), s_t) \in \bar{\mathcal{C}}(\rho_0, \rho_1)} \int_{0}^{1} \int_{\mathtt{G}} d^p(\mathfrak{g}, \mathfrak{g}_1) + \tau\Theta(\rho_t, s_t) \text{d}\mathfrak{g}\,\text{d}t,
    \end{equation}
    which is the dynamic equivalent to the Kantorovich relaxation of (\ref{eq:unbalanced-wasserstein}) and can be used to compute both the total Wasserstein geodesic from the initial to terminal conditions or the instantaneous geodesic at pose $\mathfrak{g} \in \mathtt{SE}(3)$ to the desired pose. \\
    
    To find the value of $\mathbf{u}_{\text{DC}}$, we leverage the result from \cite{ot_linear_2017} to compute control gains based on the optimal transport map $T^\star$. Let $\Psi(\lambda) = e^{A\lambda}$ be the state transition function for the port-Hamiltonian dynamics described in (\ref{eq:pH-manip}), where $A = \mathcal{J}(\mathfrak{g}, \mathfrak{p})$. The controllability Gramian is denoted as:
    \begin{equation}
    	W_c(t_0, t_1) = \int_{t_0}^{t_1} \Psi(\lambda) \mathbf{B}\mathbf{B}^\top \Psi^{-1}(\lambda) \text{d}\lambda.
    \end{equation}\label{eq:udc_control}
    The control input is defined as:
    \begin{equation}
    	\mathbf{u}_{\text{DC}} = \mathbf{B}^\top\Psi^\top W_c^{-1}\left[T^\star\circ T_t^{-1} - T_t^{-1}\right],
    \end{equation}
    where:
    \begin{equation}\label{eq:transport-time}
    	T_t = \Psi W_c(t, 1)W_c^{-1}\Psi\mathbf{x} + W_c(0, t)\Psi^\top W_c^{-1}T^\star.
    \end{equation}
    In (\ref{eq:transport-time}), the two Gramians denoted as $W_c(\cdot, \cdot)$ are the "to-go" and "prior" based on the pose at $t$. The inclusion of $\mathbf{u}_{\text{DC}}$ rounds out our control law in (\ref{eq:vs-pid}), and essentially provides us with error correction to the desired depth map in the visual servo controller. Our approach can be labelled as PID control with gravity compensation in both the pose and image spaces, leading to a hybrid method that combines the PD capabilities in PBVS and the error correction in IBVS. 
    
    \begin{figure*}[t]
    	\centering
    	\includegraphics[width=.8\linewidth]{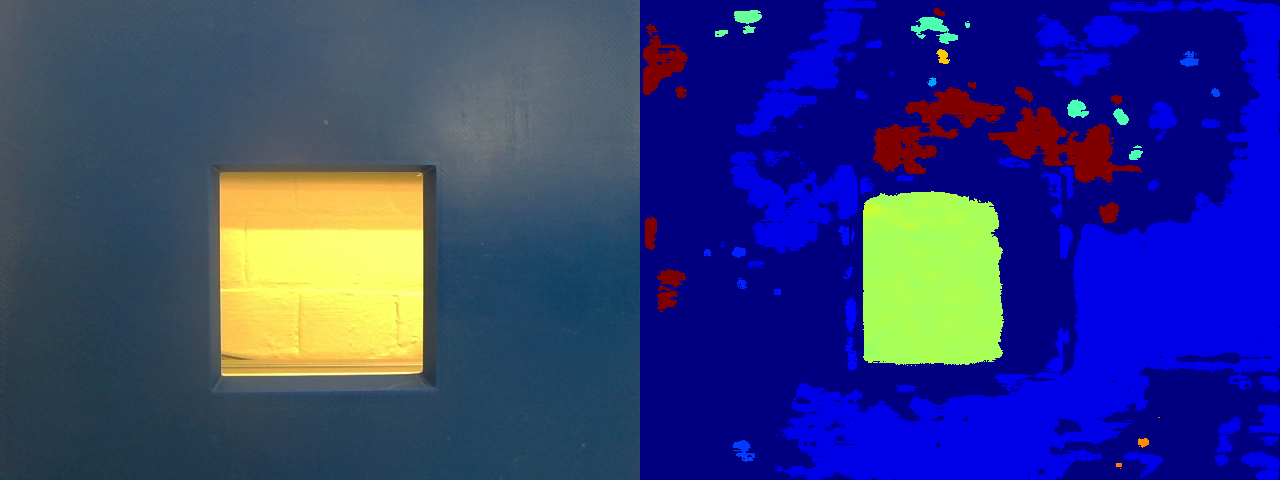}
    	\label{fig:square-hole}
    	\caption{Plots of the feature being evaluated, taken at the target pose $\mathfrak{g}_1$.}
    	\label{fig:features}
    \end{figure*}

    \section{Evaluation} \label{sec:results}

    \subsection{Overview}
    
    To validate our approach, we consider an example real-world problem that can be solved with visual servoing. We use a representative feature from the common peg-in-hole literature, shown in Figure \ref{fig:features}, which has the same target pose $\mathfrak{g}_1$ and depth map $\rho_1$ with randomised initial poses. For our robot manipulator input, the control in (\ref{eq:vs-pid}) becomes a wrench $\boldsymbol{\xi}_{\mathbf{b}, t} = \begin{bmatrix}
    \boldsymbol{v}_t, \boldsymbol{\omega}_t
    \end{bmatrix}^\top \in \mathfrak{se}^\ast(3)$:
    \begin{equation} \label{eq:wrench}
    	\mathbf{u} = \mathbf{B}^+\begin{bmatrix}
    		\boldsymbol{v}_t \\
    		\boldsymbol{\omega}_t
    	\end{bmatrix}.
    \end{equation}
    We evaluate the ability of the controller to generalise its control with four random poses. These are listed in Appendix \ref{app:exp-details}, and constitute various position and rotation distances from the desired pose. To ensure robustness to encoder and sensor noise in the robot, we assume that our system has converged to its final pose when the geodesic distance is below some threshold $\epsilon$. Our approach is summarised in the pseudocode provided in Algorithm \ref{alg:geo-vs}, and we provide the source code and experimentation details needed to replicate our setup \href{https://github.com/ManufacturingInformatics/geo-visual-servo}{here}.
    
    \begin{algorithm}[t]
    	\caption{Pseudocode for the geometric visual servo algorithm using optimal transport.}\label{alg:geo-vs}
    	\begin{algorithmic}
    		\Require $\mathbf{K}_d, \mathbf{K}_\mathbf{p}, \mathbf{K}_\mathbf{R} \succ 0$, ($\mathfrak{g}_1, \mathfrak{p}_1) \in \mathsf{T}^\ast\mathtt{SE}(3)$, $\rho_1 \in \mathcal{P}(\mathtt{SE}(3))$
    		\State $(\mathfrak{g}_0, \mathfrak{p}_0) \leftarrow$ Initial cotangent pose 
    		\State $T^\ast \leftarrow$ Compute unbalanced OT map
    		\While{$d(\mathfrak{g}_t, \mathfrak{g}_1) > \epsilon$}
    		\State $(\mathfrak{g}_t, \mathfrak{p}_t) \leftarrow$ Current state
    		\State $\mathbf{u}_{\text{ES}} \leftarrow \text{ComputeEnergyShaping}(\mathfrak{g}_t, \mathfrak{p}_t)$
    		\State $\mathbf{u}_{\text{DI}} \leftarrow \text{ComputeDamping}(\boldsymbol{\xi}_t)$
    		\State $\mathbf{u}_{\text{DC}} \leftarrow \text{ComputeDynamicOT}(\rho_t)$
    		\State $\mathbf{u} \leftarrow \mathbf{u}_{\text{ES}} +  \mathbf{u}_{\text{DI}} + \mathbf{u}_{\text{DC}}$
    		\State $d(\mathfrak{g}_t, \mathfrak{g}_1) \leftarrow$ Execute input and compute geodesic
    		\EndWhile
    	\end{algorithmic}
    \end{algorithm}
    
    \begin{figure*}[t]
    	\begin{subfigure}{.19\textwidth}
    		\centering
    		\includegraphics[width=\linewidth]{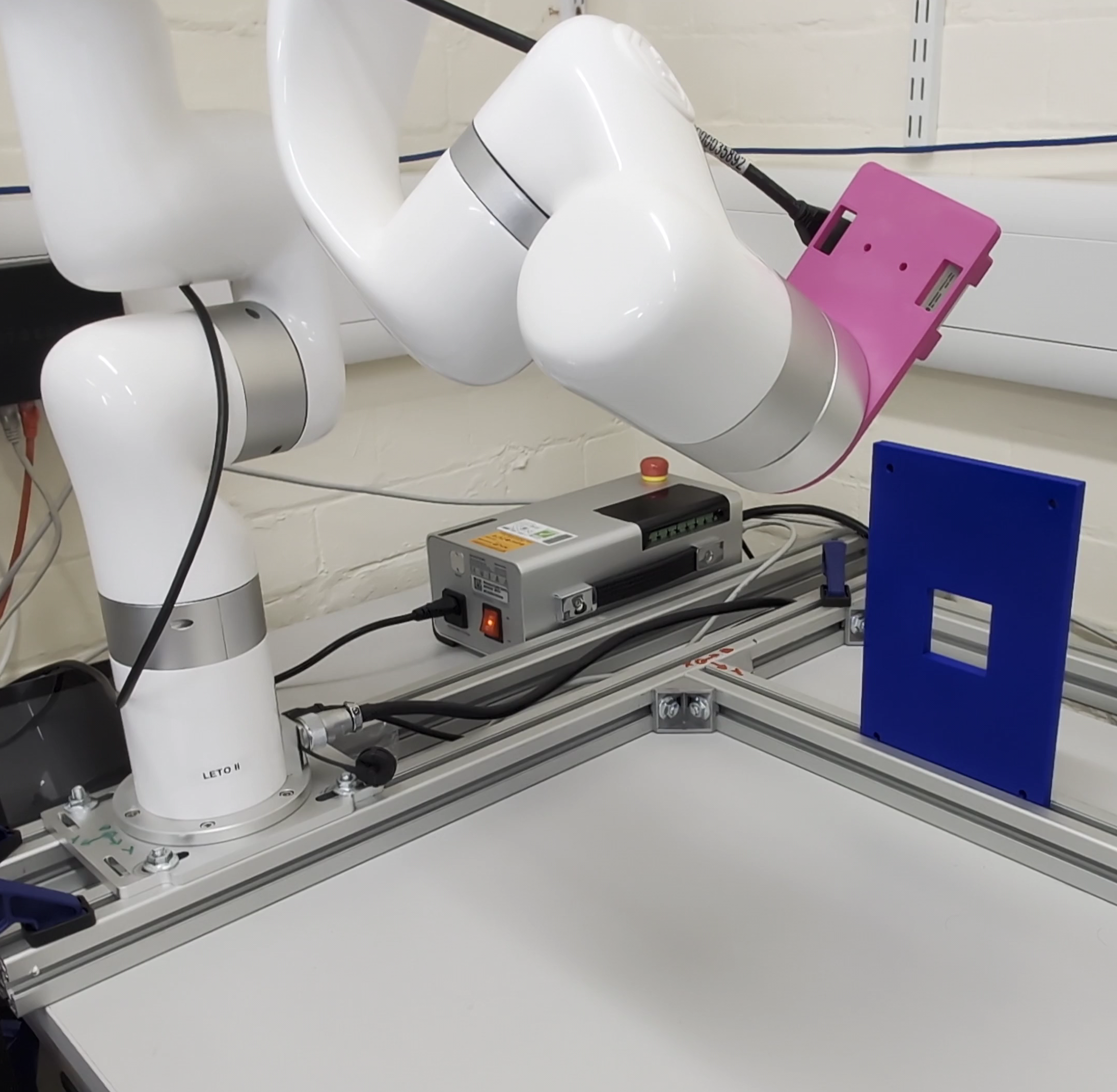}
    		\caption{$t=0$}
    		\label{fig:overlay-1}
    	\end{subfigure}%
    	\hfill
    	\begin{subfigure}{.19\textwidth}
    		\centering
    		\includegraphics[width=\linewidth]{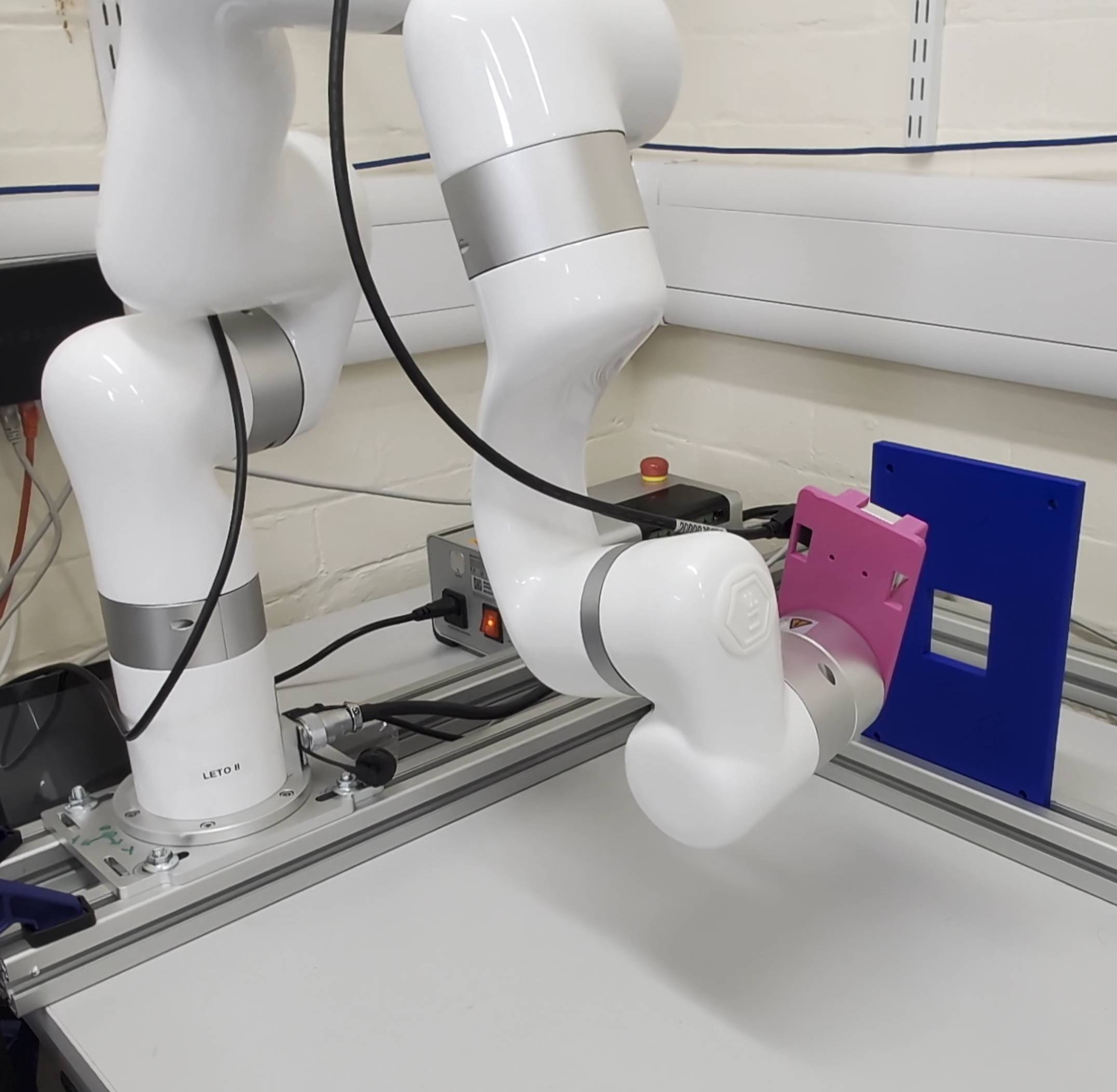}
    		\caption{$t=25$}
    		\label{fig:overlay-2}
    	\end{subfigure}%
    	\hfill
    	\begin{subfigure}{.19\textwidth}
    		\centering
    		\includegraphics[width=\linewidth]{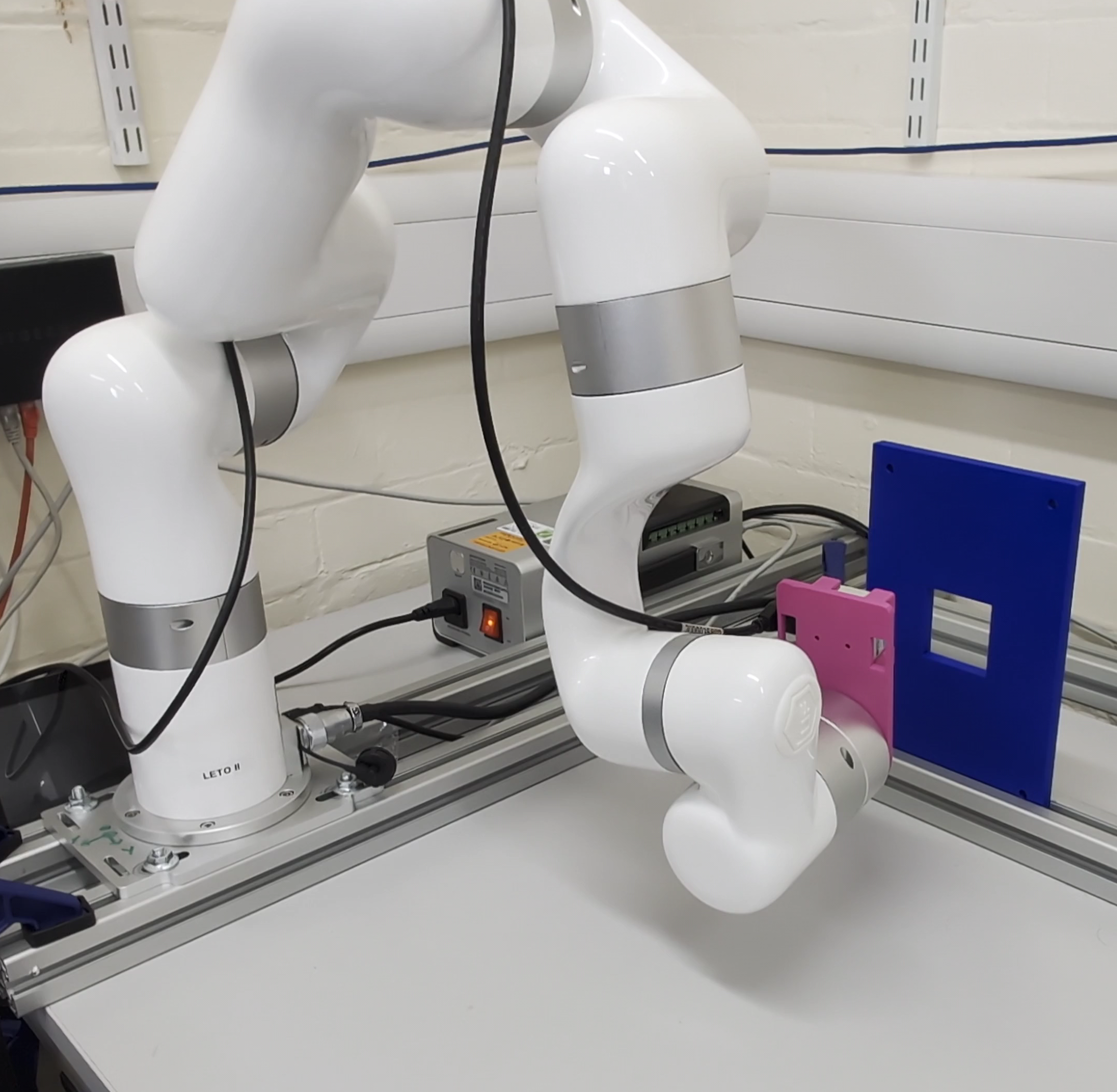}
    		\caption{$t=50$}
    		\label{fig:overlay-3}
    	\end{subfigure}%
    	\hfill
    	\begin{subfigure}{.19\textwidth}
    		\centering
    		\includegraphics[width=\linewidth]{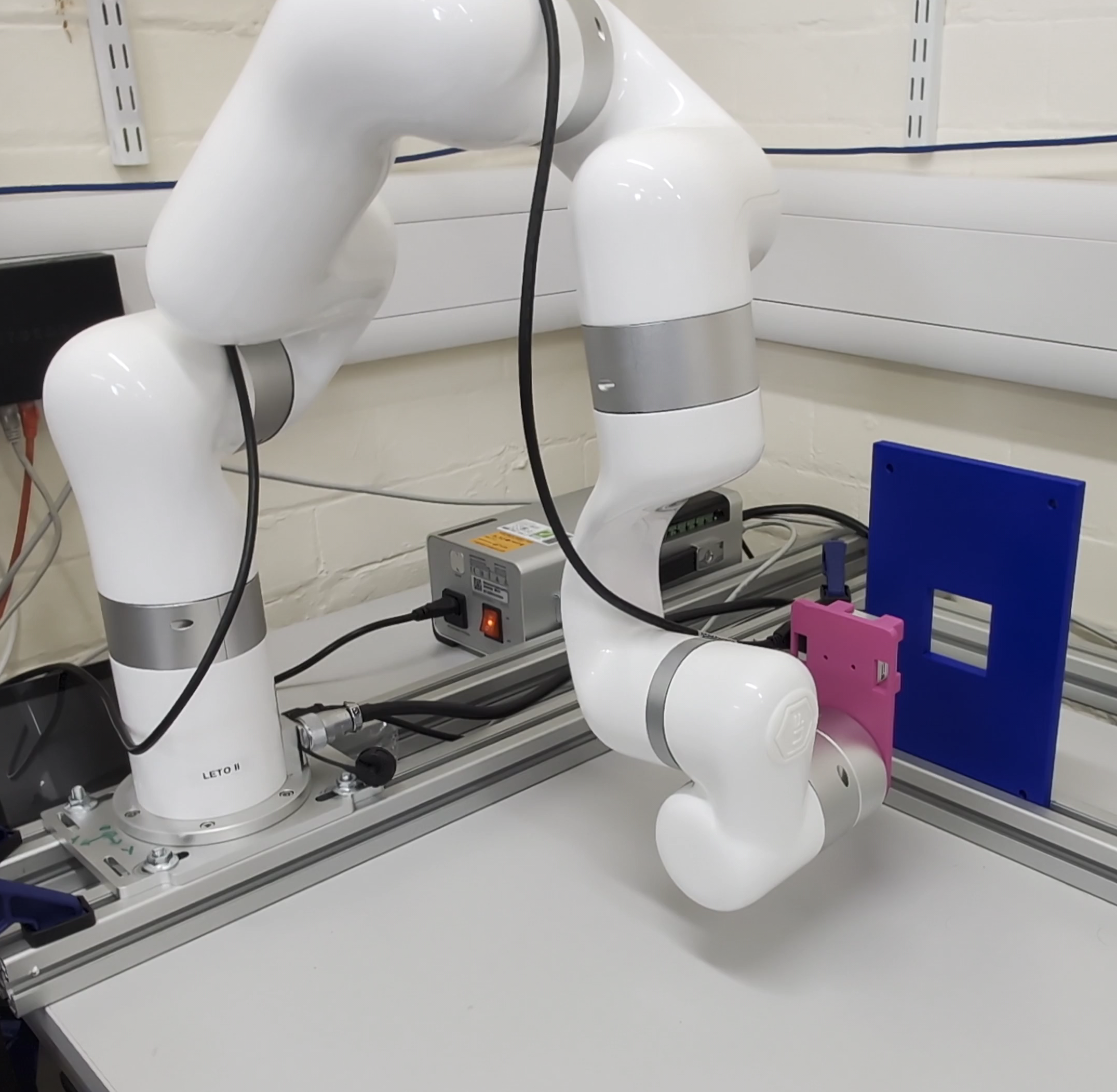}
    		\caption{$t=75$}
    		\label{fig:overlay-4}
    	\end{subfigure}%
    	\hfill
    	\begin{subfigure}{.19\textwidth}
    		\centering
    		\includegraphics[width=\linewidth]{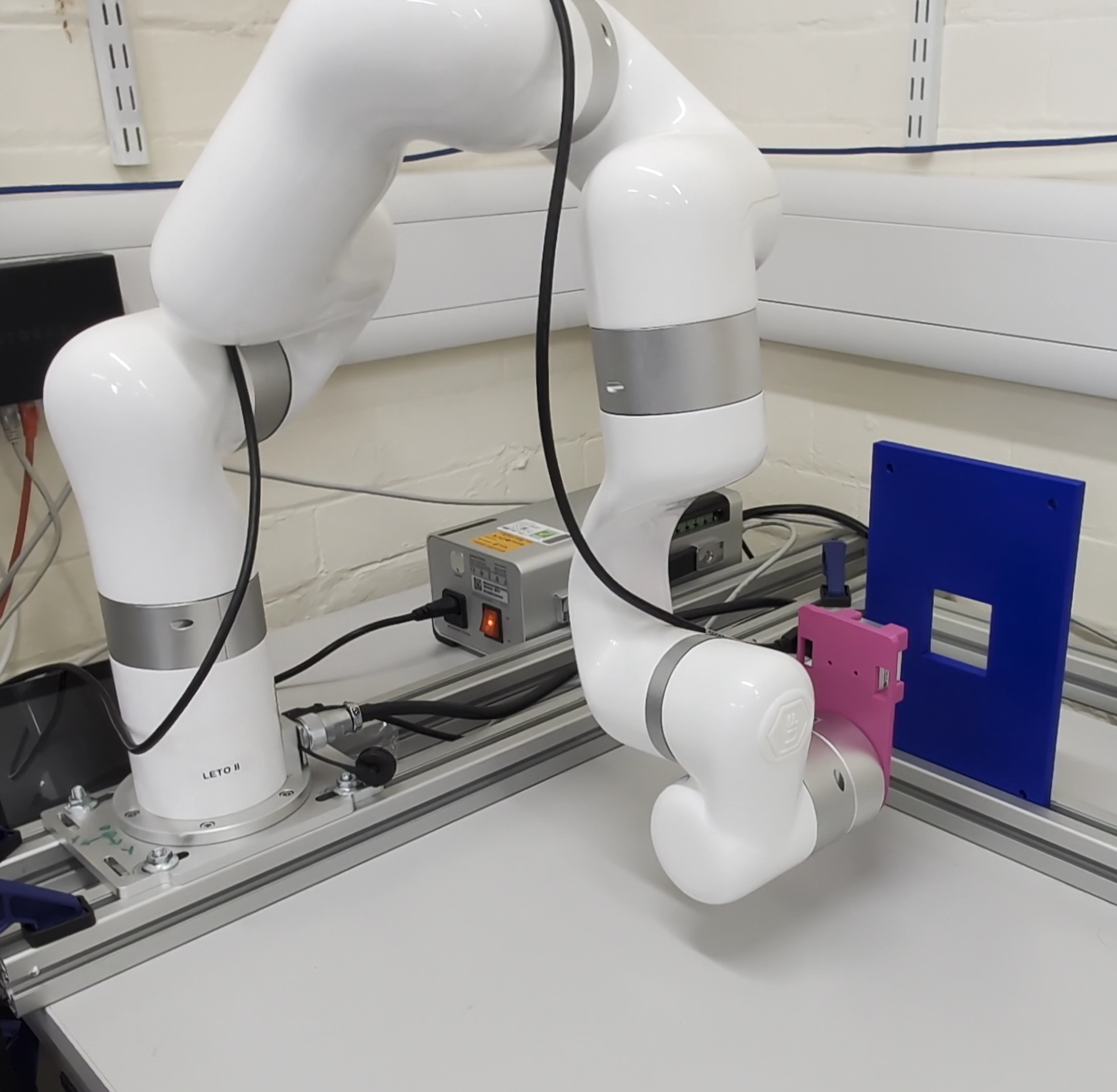}
    		\caption{$t=100$}
    		\label{fig:overlay-5}
    	\end{subfigure}%
    	\caption{Segment of the trajectory from pose 1 to the target pose $\mathfrak{g}^\ast$. As compared with Figure \ref{fig:results}, there is a fast response that slows as the robot manipulator trends towards the goal position.}
    	\label{fig:overlay}
    \end{figure*}
    
    \subsection{Results}
    
    \begin{figure*}[t]
    	\begin{subfigure}{.33\textwidth}
    		\centering
    		\includegraphics[width=.8\linewidth]{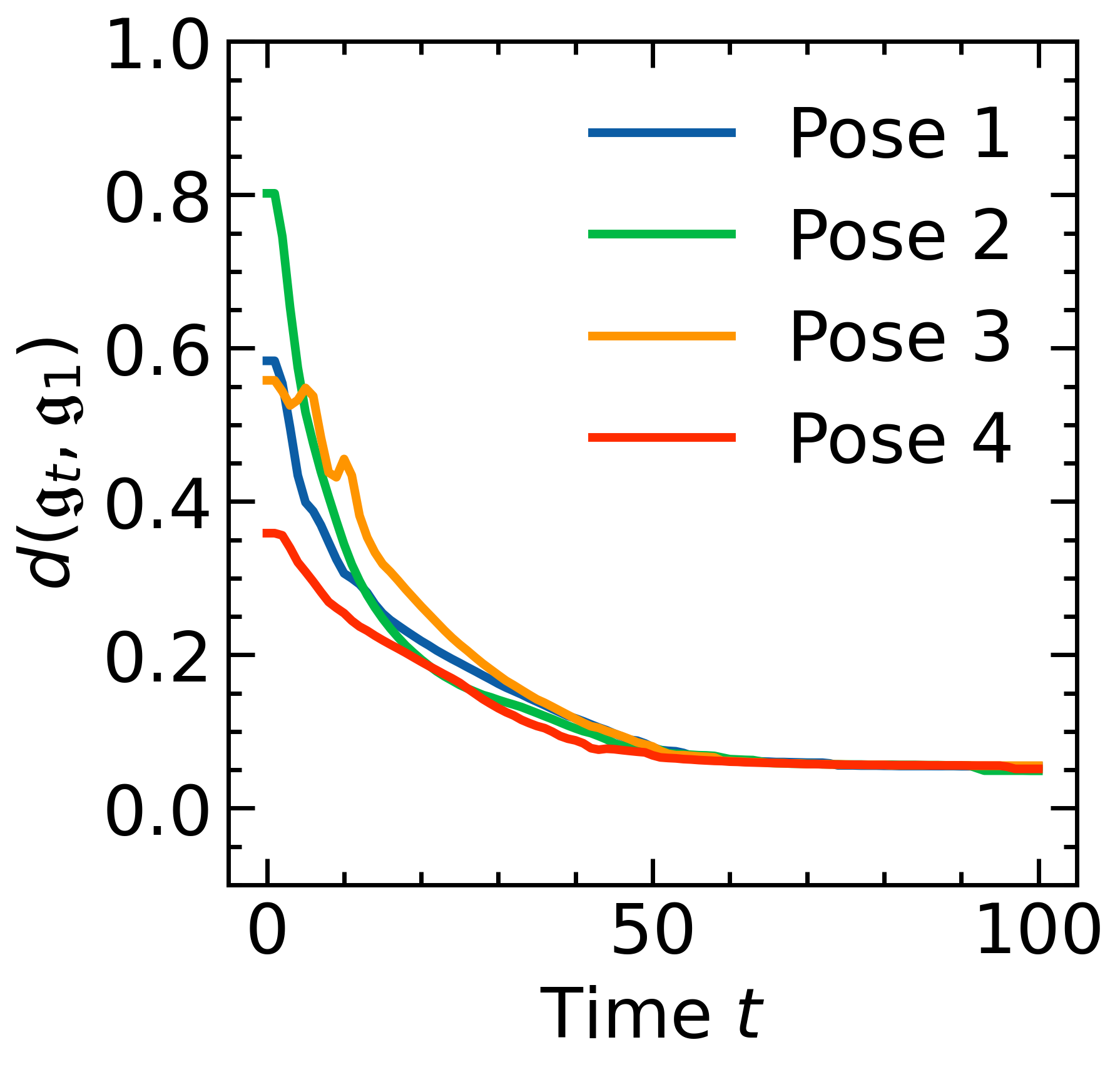}
    		\caption{}
    		\label{fig:geo-plot}
    	\end{subfigure}%
    	\begin{subfigure}{.33\textwidth}
    		\centering
    		\includegraphics[width=.84\linewidth]{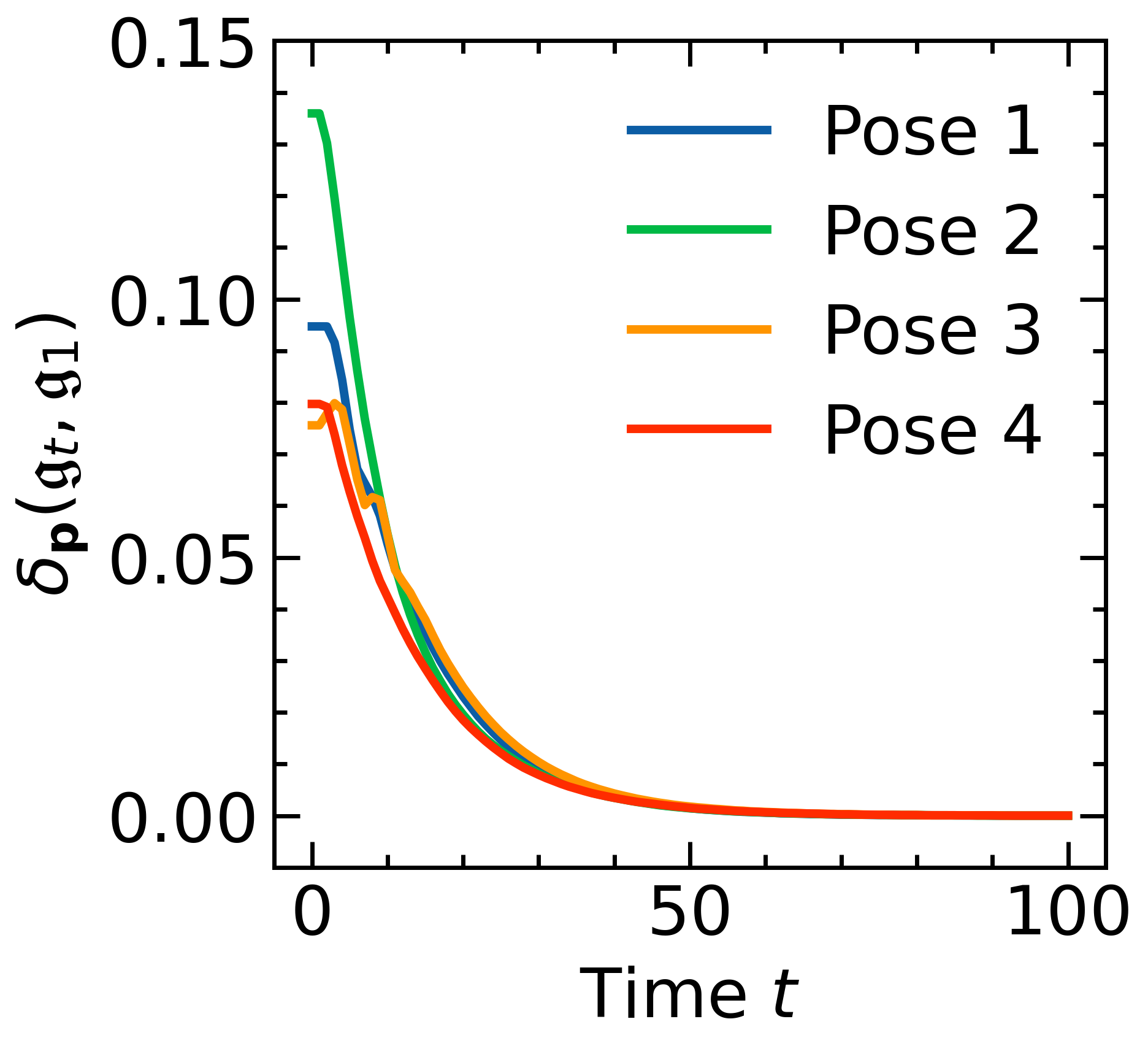}
    		\caption{}
    		\label{fig:delta-p}
    	\end{subfigure}%
    	\begin{subfigure}{.33\textwidth}
    		\centering
    		\includegraphics[width=.8\linewidth]{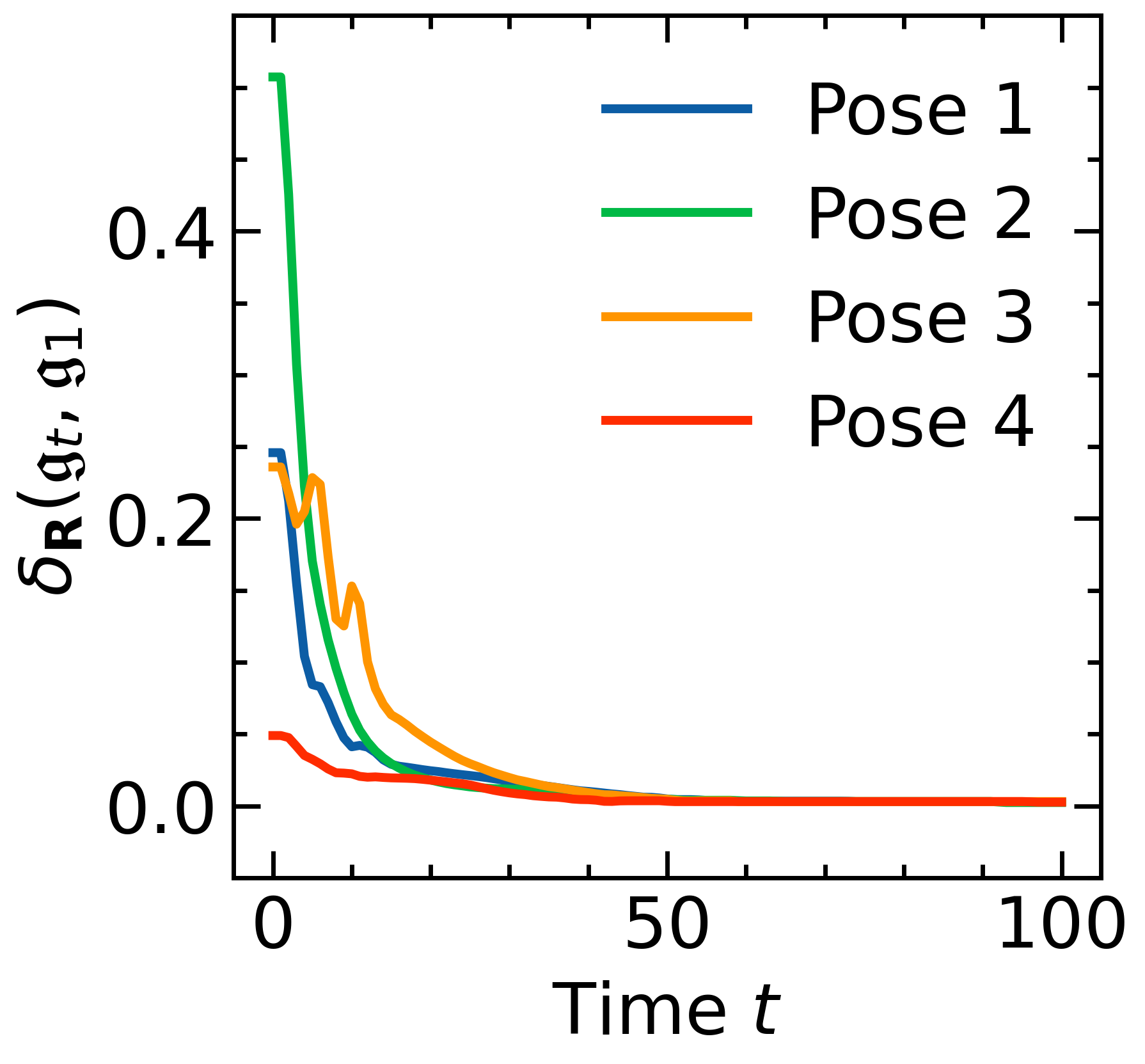}
    		\caption{}
    		\label{fig:delta-R}
    	\end{subfigure}\\
    	\centering
    	\begin{subfigure}{.33\textwidth}
    		\centering
    		\includegraphics[width=.8\linewidth]{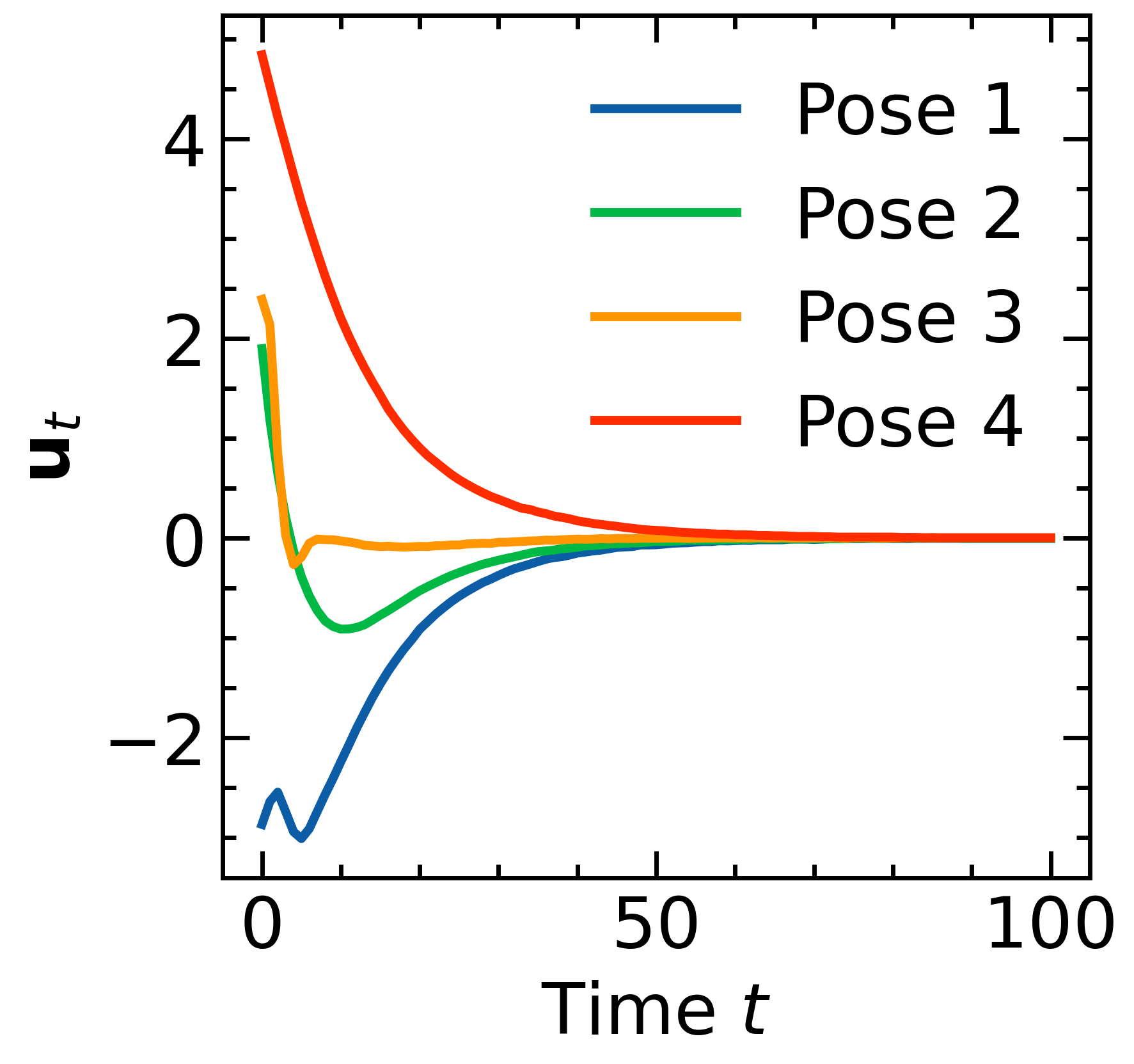}
    		\caption{}
    		\label{fig:control-input}
    	\end{subfigure}%
    	\begin{subfigure}{.33\textwidth}
    		\centering
    		\includegraphics[width=.84\linewidth]{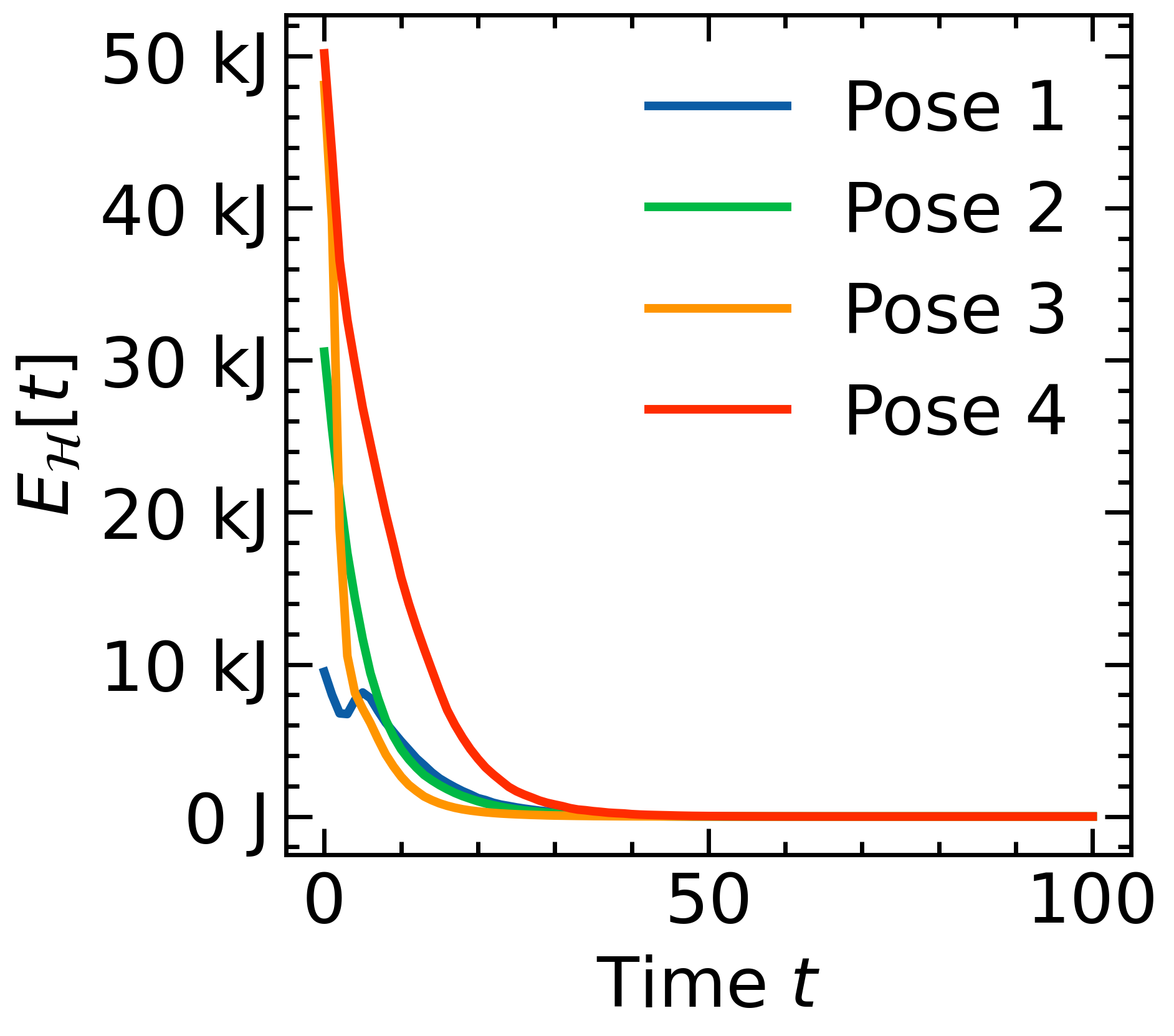}
    		\caption{}
    		\label{fig:kinetic-energy}
    	\end{subfigure}%
    	\caption{Plots of the results from 4 different initial poses to the goal pose: \textbf{(a)} The geodesic distance $d(\mathfrak{g}_t, \mathfrak{g}_1)$ in (\ref{eq:geodesic}); \textbf{(b)} The position error $\delta_\mathbf{p}(\mathfrak{g}_t, \mathfrak{g}_1)$ in (\ref{eq:delta-p}); \text{(c)} The rotational error $\delta_\mathbf{R}((\mathfrak{g}_t, \mathfrak{g}_1)$ in (\ref{eq:delta-R}); \textbf{(d)} The wrench input $\mathbf{u}_t$ from (\ref{eq:wrench}); \textbf{(e)} The kinetic energy flow $E_\mathcal{H}[t]$ from (\ref{eq:hamil-KE}).}
    	\label{fig:results}
    \end{figure*}
    
    Figure \ref{fig:overlay} showcases how the robot converges towards the target pose using the input wrench $\mathbf{u}$, with a fast initial response due to a large pose and depth map error and eventually slows as the manipulator end-effector reaches the desired pose and depth map. The path the manipulator end-effector traces through $\mathbb{R}^3$ matches the shortest path geodesic in $\mathtt{SE}(3)$ and demonstrates the controller's ability to produce task-space minimum-energy control inputs to reach the target pose. When evaluating different initial poses, Figure \ref{fig:geo-plot} showcases how the controller is capable of dealing with a variety of different initial poses and geodesic distances from the target pose, converging on zero offset from the target pose smoothly. This is also echoed in Figure \ref{fig:delta-p} and Figure \ref{fig:delta-R}, which also demonstrates how the controller dynamics can cope with rotational changes to ensure stability. Additionally, in Figure \ref{fig:delta-R}, we can see that there is a slight fluctuation in the error which can arise due to local instabilities when the Jacobian matrix becomes rank deficient and experiences rapidly increasing velocities. Despite this instability, our approach was able to recover from a potentially unstable control input. Furthermore, our controller is capable of operating with fast real-time performance as our loop time for computing control inputs is $300$ms even in the presence of rotational error disturbances and variations in the transport map control input. \\
    
    Another point of analysis is the conservation of energy when the controller is in operation. Figure \ref{fig:control-input} shows the evolution of the control input as the manipulator end-effector trends towards $\mathfrak{g}_1$, which relates to the Hamiltonian kinetic energy in (\ref{eq:hamil-KE}) through the relationship (\ref{eq:momenta}). The kinetic energy in Figure \ref{fig:kinetic-energy} showcases how the energy of the manipulator trends towards zero as $\mathfrak{p}_t \rightarrow 0$, which indicates total energy trends towards the gravity vector $\mathcal{G}(\mathfrak{g})$. From an operational perspective, this indicates that our approach would be capable of supporting extended loads that would be present in many manipulation tasks \cite{contact_rich_2024}. 
    
    \subsection{Discussion}  
    
    This work successfully reframes the visual servoing problem as a mass-movers problem, where the goal is to move a depth map from an initial distribution and pose to a desired distribution and pose. Further work has demonstrated that these control laws are capable of orientating to a variety of different target poses \cite{contact_rich_2024}. Furthermore, our method frames the control process as a flow control problem, where our flow is generated from the 2-Wasserstein distance with the energy shaping and dynamic injection providing performance guarantees regarding dynamic response. \\
    
    Further work in this area would need to address target poses that are not static (i.e. $\dot{\mathfrak{g}}_1 \neq 0$). In this instance, the boundary condition for the 2-Wasserstein distance no longer holds and the optimal transport gain cannot be computed. A way to mitigate this would be to treat the control problem as a trajectory optimisation control problem, aligning our work with the literature regarding predictive control for visual servoing \cite{vs_mpc_2019, dp_ot_2024}. \\
    
    To improve the accuracy of the depth map, LiDAR cameras would present better accuracy at greater ranges. However, the number of points and the sampling rate of the cameras would mean computing real-time transport maps is computationally expensive. Neural optimal transport methods would allow faster computation times for high-dimensional point clouds, enabling greater accuracy for systems such as aerospace manufacturing and electronics assembly \cite{Bunne_ICML_2023}. \\
    
    Another interesting area for further work is the presence of instability during the computation of the body Jacobian. As noted in proof of Theorem \ref{thm:cl-stability}, Lyapunov stability is guaranteed when the Jacobian is full rank, as the Jacobian can only have a single solution when it is singular. However, when the Jacobian is rank deficient and its rank drops below the number of joints the manipulator possesses, kinematic singularities and non-singular solutions in the computation of joint speeds occur which lead to unstable behaviour and exponentially increasing joint velocities \cite{Donelan_2007}. Avoiding areas of instability can be incorporated into the dynamics by linearising the system dynamics, or by evaluating stability equations before computing control inputs. Alternatively, one could use gain scheduling control to avoid regions of instability or through a null-space projection of the Jacobian \cite{contact_rich_2024, manipulation}. \\
    
    \section{Conclusion} \label{sec:conclusion}
    
    In this work, we presented a reframing of the visual servoing control problem in robotics by combining pose and vision target states. We leverage the geometry of port Hamiltonian systems to develop a control law for multi-body systems that incorporates geodesic shortening properties via optimal transport on point cloud data. We provide a test case using a robot manipulator on two different feature types and illustrate future areas for research. 
	
	\printbibliography[]
	
	\appendix
	
	\section{Conservation of Energy}\label{app:energy-conservation}

    \begin{proof}\label{proof:kinetic}
        To model the Hamiltonian for energy conservation, it must hold that the Hamiltonian equals the sum of energies in the system such that \cite{mechanics_1960}:
        \begin{equation}
            \mathcal{H} = \mathcal{K} + \mathcal{G}.
        \end{equation}
        From (\ref{eq:kinetic}) and (\ref{eq:lie-state}), we see that the change of variables $(\mathfrak{g}, \dot{\mathfrak{g}}) \mapsto (\mathfrak{g}, \mathfrak{p})$ from the Legendre transform (\ref{eq:legendre-hamilton}) alters the kinetic energy $\mathcal{K}$ only. By observation, we see that the kinetic energy term in (\ref{eq:hamil-manip}) is the phase-space denotation of kinetic energy.
    
        Using Definition \ref{def:lie-poisson-bracket}, by the multivariate chain rule:
        \begin{equation}
            \begin{aligned}
                \frac{\text{d}\mathcal{H}}{\text{d}t}& = \frac{\partial\mathcal{H}}{\partial t} + \frac{\partial\mathcal{H}}{\partial\mathfrak{g}}\dot{\mathfrak{g}} + \frac{\partial\mathcal{H}}{\partial\mathfrak{p}}\dot{\mathfrak{p}}\\
                & = \frac{\partial\mathcal{H}}{\partial t} + \left\langle \frac{\partial\mathcal{H}}{\partial\mathfrak{g}}, \dot{\mathfrak{g}} \right\rangle + \left\langle \frac{\partial\mathcal{H}}{\partial\mathfrak{p}}, \dot{\mathfrak{p}} \right\rangle. 
            \end{aligned}
        \end{equation}
        Let $\boldsymbol{\eta} = \frac{\partial\mathcal{H}}{\partial\mathfrak{g}}$, then using (\ref{eq:hamil-dyn-p}) and by comparing (\ref{eq:lie-state}) with (\ref{eq:hamil-dyn-g}) \cite[Equation. 19]{duong_hamiltonian_2021}, it holds that:
        \begin{equation}
            \begin{aligned}
                & = \frac{\partial\mathcal{H}}{\partial t} + \langle \boldsymbol{\eta}, \mathsf{T}_e\mathsf{L}_{\mathfrak{g}}(\hat{\boldsymbol{\xi}}) \rangle + \langle \hat{\boldsymbol{\xi}}, \boldsymbol{\eta} \rangle \\
                & = \frac{\partial\mathcal{H}}{\partial t} + \langle \boldsymbol{\eta}, \mathsf{T}_e\mathsf{L}_{\mathfrak{g}}(\hat{\boldsymbol{\xi}}) \rangle - \left\langle \hat{\boldsymbol{\xi}}, \mathsf{T}_e\mathsf{L}^\ast_{\mathfrak{g}}\left( \frac{\partial\mathcal{H}}{\partial\mathfrak{g}}\right)\right\rangle + \langle\hat{\boldsymbol{\xi}}, \mathtt{ad}^\ast_{\hat{\boldsymbol{\xi}}}(\mathfrak{p}) \rangle + \langle \hat{\boldsymbol{\xi}}, \mathbf{B}(\mathfrak{q})\mathbf{u} \rangle \\
                & = \frac{\partial\mathcal{H}}{\partial t} + \langle \boldsymbol{\eta}, \mathsf{T}_e\mathsf{L}_{\mathfrak{g}}(\hat{\boldsymbol{\xi}}) \rangle - \left\langle \hat{\boldsymbol{\xi}}, \mathsf{T}_e\mathsf{L}^\ast_{\mathfrak{g}}\left( \boldsymbol{\eta}\right)\right\rangle + \langle\hat{\boldsymbol{\xi}}, \mathtt{ad}^\ast_{\hat{\boldsymbol{\xi}}}(\mathfrak{p}) \rangle + \langle \hat{\boldsymbol{\xi}}, \mathbf{B}(\mathfrak{q})\mathbf{u} \rangle.
            \end{aligned}
        \end{equation}
        Using \cite[Definition. 9]{duong_hamiltonian_2021} and \ref{eq:adjoint}:
        \begin{equation}
            \langle\hat{\boldsymbol{\xi}}, \mathtt{ad}^\ast_{\hat{\boldsymbol{\xi}}}(\mathfrak{p}) \rangle = \langle \mathtt{ad}_{\hat{\boldsymbol{\xi}}}(\hat{\boldsymbol{\xi}}), \mathfrak{p} \rangle = \langle [\hat{\boldsymbol{\xi}}, \hat{\boldsymbol{\xi}}], \mathfrak{p}\rangle = 0
        \end{equation}
        and Definition (\ref{def:dual-map}):
        \begin{equation}
            \begin{aligned}
                & = \frac{\partial\mathcal{H}}{\partial t} + \langle \boldsymbol{\eta}, \mathsf{T}_e\mathsf{L}_{\mathfrak{g}}(\hat{\boldsymbol{\xi}}) \rangle - \langle \boldsymbol{\eta}, \mathsf{T}_e\mathsf{L}_{\mathfrak{g}}(\hat{\boldsymbol{\xi}}) \rangle + \langle \hat{\boldsymbol{\xi}}, \mathbf{B}(\mathfrak{q})\mathbf{u} \rangle \\
                & = \frac{\partial\mathcal{H}}{\partial t} + \langle \hat{\boldsymbol{\xi}}, \mathbf{B}(\mathfrak{q})\mathbf{u} \rangle.
            \end{aligned}
        \end{equation}
        As the Hamiltonian is independent of time (as both the kinetic and potential energies are not evolving as a function of time for non-relativistic systems \cite{mechanics_1960}), $\frac{\partial\mathcal{H}}{\partial t} = 0$ and the rate of change for the Hamiltonian becomes:
        \begin{equation}
            \frac{\text{d}\mathcal{H}}{\text{d}t} = \langle \hat{\boldsymbol{\xi}}, \mathbf{B}(\mathfrak{q})\mathbf{u} \rangle.
        \end{equation}
        Therefore, if there is zero input to the system, $\frac{\text{d}\mathcal{H}}{\text{d}t} = 0 $ and energy is conserved.
    \end{proof}
    
\section{$\mathtt{SE}(3)$ Metric Tensor and Geodesics}\label{app:metric-tensor}

The choice of the metric tensor $\mathbb{G}_{\gamma(t)}$ is dictated by the lack of a bi-invariant metric of $\mathtt{SE}(3)$ \cite[Corollary. A.5.1]{math_intro_manip_1994}. However, we can choose the kinetic energy metric based on the physics of the multi-body system, which is related to both the motion in task space as well as the configuration of the robot $\mathfrak{q} \in \mathcal{Q}$ \cite{geo_control_2005}:
\begin{equation}
    \mathbb{G}_{\text{KE}} = \frac{1}{2}\mathbb{G}_{\mathcal{L}}(\mathfrak{q}) \hat{\boldsymbol{\xi}}_b^\top\hat{\boldsymbol{\xi}}_b,
\end{equation}
where $\mathbb{G}_{\mathcal{L}}(\mathfrak{q}) \equiv \tilde{\mathbf{M}}(\mathfrak{q})$ represents the mass-matrix tensor for the Lagrangian kinetic energy \cite{pontryagin_riemannian_2015}. Computing the geodesic on $\mathtt{SE}(3)$ proves a more complex task, as a closed-form solution for general Lie groups does not currently exist \cite{canzini_iros_2024}. However, we can exploit the underlying physics of the Hamiltonian dynamics to obtain an expression for the geodesic in the form of Hamiltonian flows. Hamilton's principle of least action gives us the energy flow across the cotangent bundle
\begin{equation} \label{eq:hamil-KE}
    E_{\mathcal{H}}[t] = \int_{I} \frac{1}{2}\mathbb{G}^{-1}_{\mathcal{L}}(\mathfrak{q})\mathfrak{p}^\top\mathfrak{p}\, \text{d}t.
\end{equation}
This action functional allows us to define the cogeodesic flow on the cotangent space. Let $\gamma(t) = \begin{bmatrix}
    \mathfrak{g}(t),\mathfrak{p}(t)
\end{bmatrix}$ be the coordinates for the geodesic $\gamma(t) : I \rightarrow \mathsf{T}^\ast\mathtt{SE}(3)$ on the interval $I \in [\mathfrak{g}_0, \mathfrak{g}_1]$. We can use the derivations of the geometric features in Section \ref{sec:sub:ph-dynamics} to find the path velocities:
\begin{equation} \label{eq:path_velocities}
    \dot{\gamma}(t) = \begin{bmatrix}
        \dot{\mathfrak{g}}(t) \\
        \dot{\mathfrak{p}}(t)
    \end{bmatrix} = \begin{bmatrix}
        \mathfrak{g}\hat{\boldsymbol{\xi}}_b \\
        \mathtt{ad}^*_{\hat{\boldsymbol{\xi}}}(\mathfrak{p}) - \mathsf{T}^\ast_e\mathsf{L}_\mathfrak{g} \left ( \frac{\partial \mathcal{H}(\mathfrak{g}, \mathfrak{p})}{\partial \mathfrak{g}} \right ) + \mathbf{B}(\mathfrak{q})\mathbf{u}
    \end{bmatrix}.
\end{equation}
Then (\ref{eq:path_velocities}) provides us with the geodesic distance:
\begin{equation} \label{eq:geodesic}
    d(\mathfrak{g}_0, \mathfrak{g}_1) = \inf_{\gamma \in PC([0,1]; \mathtt{SE}(3))} \int_{I} \sqrt{\begin{bmatrix}
    \dot{\mathfrak{g}} \\
    \dot{\mathfrak{p}}
    \end{bmatrix}^\top \mathbb{G}_{\mathcal{H}}\begin{bmatrix}
        \dot{\mathfrak{g}} \\
        \dot{\mathfrak{p}}
        \end{bmatrix}} \text{d}t
\end{equation}
for all piece-wise continuous curves on the interval $I \in [\mathfrak{g}_0, \mathfrak{g}_1]$, where $\mathbb{G}_\mathcal{H} = \mathbb{G}^{-1}_{\mathcal{L}} = \begin{bmatrix}
   	\mathbb{G}_\mathbf{p} & \mathbf{0} \\
   	\mathbf{0} & \mathbb{G}_\mathbf{R}
\end{bmatrix}$ is the Hamiltonian metric tensor. For an approximation of this distance at runtime, as we have a smooth Riemannian metric space, we formulate the left-invariant distance metric between two poses in $\mathtt{SE}(3)$ as \cite{lie_metrics_2024}:
\begin{equation}
   	d(\mathfrak{g}_0, \mathfrak{g}_1) \approx \sqrt{\delta_{\mathbf{R}}(\mathbf{R}_0, \mathbf{R}_1) + \delta_{\mathbf{p}}(\mathbf{p}_0, \mathbf{p}_1)},
\end{equation}
where:
\begin{align}
   	\delta_{\mathbf{R}}(\mathbf{R}_0, \mathbf{R}_1) = & \lVert\mathbb{G}_{\mathbf{R}}\lVert_F\; \cos^{-1}\left(\frac{\text{tr}((\mathbf{R}_0^\top\mathbf{R}_1 )) -1}{2}\right), \label{eq:delta-R} \\
   	\delta_{\mathbf{p}}(\mathbf{p_0}, \mathbf{p}_1) = & \lVert \mathbb{G}_\mathbf{P}(\mathbf{p}_0 - \mathbf{p}_1) \lVert_2, \label{eq:delta-p}
\end{align}
where $\lVert \cdot \lVert_2$ is the vector 2-norm and $\lVert \cdot \lVert_F$ is the Frobenius norm. 

\section{Experimental Details}\label{app:exp-details}

The four poses that are used in our experimentation were generated by randomly sampling the joint space of the robot to produce poses where the feature is in the camera's field of view. This was done to ensure that the estimated transport map would include the feature in its computation. The four poses are presented here in their homogeneous transformation form:
\begin{equation}
	\begin{aligned}
		\begin{matrix}
			\mathfrak{g}_1 = \begin{bmatrix}
				-0.63 & -0.47 & -0.61 & -0.16 \\
				-0.25 & 0.88 & -0.42 & 0.49 \\
				0.73 & -0.11 & -0.67 & 0.34 \\
				0 & 0 & 0 & 1 \\
			\end{bmatrix}, & \mathfrak{g}_2 = \begin{bmatrix}
				0.05 & -0.816 & -0.58 & -0.07 \\
				-0.85 & 0.27 & -0.45 & 0.54 \\
				0.52 & 0.51 & -0.68 & 0.15 \\
				0 & 0 & 0 & 1 \\
			\end{bmatrix}
		\end{matrix},\\
		\begin{matrix}
			\mathfrak{g}_3 = \begin{bmatrix}
				-0.18 & -0.62 & -0.76 & -0.1 \\
				-0.98 & 0.2 & -0.08 & 0.34 \\
				0.1 & 0.76 & -0.64 & 0.31 \\
				0 & 0 & 0 & 1 \\
			\end{bmatrix}, & \mathfrak{g}_4 = \begin{bmatrix}
				0.28 & 0.25 & -0.93 & -0.05 \\
				0.65 & 0.66 & 0.38 & 0.21 \\
				0.7 & -0.71 & 0.03 & 0.06 \\
				0 & 0 & 0 & 1 \\
			\end{bmatrix}
		\end{matrix}
	\end{aligned}
\end{equation}
The target pose is shown here in the homogenous transformation form:
\begin{equation}
	\mathfrak{g}^\ast = \begin{bmatrix}
		0.01 & 0.07 & -1.0 & -0.27 \\
		0.01 & 1.0 & 0.07 & 0.31 \\
		1.0 & -0.01 & 0.01 & 0.038 \\
		0 & 0 & 0 & 1
	\end{bmatrix}
\end{equation}
All the distances for the position vector in the homogenous transforms are given in meters. 
\end{document}